
\documentclass[10pt,journal,compsoc]{IEEEtran}
%


%
\usepackage{ifpdf}
\ifpdf
\else
\fi

%
\ifCLASSOPTIONcompsoc
  \usepackage[nocompress]{cite}
\else
  \usepackage{cite}
\fi
%

%
\ifCLASSINFOpdf
  \usepackage[pdftex]{graphicx}
  \graphicspath{{../pdf/}{../jpeg/}}
  \DeclareGraphicsExtensions{.pdf,.jpeg,.png}
\else
  \usepackage[dvips]{graphicx}
  \graphicspath{{../eps/}}
  \DeclareGraphicsExtensions{.eps}
\fi
%
%

%
\usepackage{amsmath, amssymb, amsthm}
\usepackage{mathtools}
%
\interdisplaylinepenalty=2500

%
\usepackage{algorithm}
\usepackage{algorithmic}

%
\usepackage{array}


\ifCLASSOPTIONcompsoc
 \usepackage[caption=false,font=footnotesize,labelfont=sf,textfont=sf]{subfig}
\else
 \usepackage[caption=false,font=footnotesize]{subfig}
\fi

\usepackage{stfloats}
%

\ifCLASSOPTIONcaptionsoff
 \usepackage[nomarkers]{endfloat}
\let\MYoriglatexcaption\caption
\renewcommand{\caption}[2][\relax]{\MYoriglatexcaption[#2]{#2}}
\fi
\usepackage{url}

\usepackage{bbm}
\usepackage{xcolor}
\usepackage{kotex}

\usepackage{booktabs}
\usepackage{multirow}
\newcolumntype{P}[1]{>{\centering\arraybackslash}p{#1}}

\newtheorem{theorem}{Theorem}

\newtheorem{lemma}{Lemma}
\newtheorem{corollary}{Corollary}
\theoremstyle{definition}

\newtheorem{assumption}{Assumption}
\theoremstyle{remark}

\newcommand\numberthis{\addtocounter{equation}{1}\tag{\theequation}}


\hyphenation{op-tical net-works semi-conduc-tor}

\begin{document}
%
\title{Sparse-SignSGD with Majority Vote for Communication-Efficient Distributed Learning}
%
%
%
%

\author{Chanho~Park,~\IEEEmembership{Student Member,~IEEE,}
        and~Namyoon~Lee,~\IEEEmembership{Senior Member,~IEEE}
\IEEEcompsocitemizethanks{\IEEEcompsocthanksitem C. Park is with the Department
of Electrical Engineering, Pohang University of Science and Technology, Pohang, 37673 South Korea.\protect\\
E-mail: chanho26@postech.ac.kr.
\IEEEcompsocthanksitem N. Lee is with the School of Electrical Engineering, Korea University, Seoul, 02841 South Korea.\protect\\
E-mail: namyoon@korea.ac.kr.}
}

%
%

\markboth{}%
{Shell \MakeLowercase{\textit{et al.}}: Bare Demo of IEEEtran.cls for Computer Society Journals}
%



\IEEEtitleabstractindextext{%
\begin{abstract}

The training efficiency of complex deep learning models can be significantly improved through the use of distributed optimization. However, this process is often hindered by a large amount of communication cost between workers and a parameter server during iterations. To address this bottleneck, in this paper, we present a new communication-efficient algorithm that offers the synergistic benefits of both sparsification and sign quantization, called ${\sf S}^3$GD-MV. The workers in ${\sf S}^3$GD-MV select the top-$K$ magnitude components of their local gradient vector and only send the signs of these components to the server. The server then aggregates the signs and returns the results via a majority vote rule. Our analysis shows that, under certain mild conditions, ${\sf S}^3$GD-MV can converge at the same rate as signSGD while significantly reducing communication costs, if the sparsification parameter $K$ is properly chosen based on the number of workers and the size of the deep learning model. Experimental results using both independent and identically distributed (IID) and non-IID datasets demonstrate that the ${\sf S}^3$GD-MV attains higher accuracy than signSGD, significantly reducing communication costs. These findings highlight the potential of ${\sf S}^3$GD-MV as a promising solution for communication-efficient distributed optimization in deep learning.

\end{abstract}

\begin{IEEEkeywords}
Distributed optimization, gradient compression, convergence rate analysis
\end{IEEEkeywords}}

\maketitle

\IEEEdisplaynontitleabstractindextext

%
\IEEEpeerreviewmaketitle

\IEEEraisesectionheading{\section{Introduction}\label{sec:introduction}}

\IEEEPARstart{D}{istributed} stochastic gradient descent (SGD)\nocite{park2023s3gd} is a widely used method for solving large-scale optimization problems with data parallelism \cite{zinkevich2010parallelized, bottou2010large}. In theory, synchronous distributed SGD can linearly increase the training speed of high-dimensional models with the number of workers \cite{chen2016revisiting}. However, implementing such distributed SGD faces challenges, including high communication costs for exchanging gradient information between the central server and workers. This communication cost grows linearly with the number of workers. For example, the state-of-the-art image classification model, Coca \cite{yu2022coca}, has 2.1 billion parameters, requiring 8.4GB of information exchange per iteration for each worker and the central server, making it unaffordable for distributed training with limited communication networks. To address this communication bottleneck, communication-efficient distributed learning algorithms must be developed to minimize communication costs while maintaining high learning performance.

 In recent years, significant advancements have been made in the field of communication-efficient distributed learning algorithms. One popular approach to reducing communication costs per iteration is gradient quantization, such as QSGD \cite{alistarh2017qsgd}, where each worker quantizes the locally computed stochastic gradient to a limited number of bits. Another example is signSGD, a simple yet effective algorithm that only sends the signs of the locally computed gradient vector to workers \cite{bernstein2018asignsgd}. This one-bit quantized information surprisingly has a theoretical convergence guarantee for optimizing a broad range of non-convex loss functions. SignSGD has been shown to achieve the same convergence speed as traditional SGD with full gradient resolution.

Gradient sparsification is another approach to communication-efficient distributed learning, where each worker selects the largest components of the gradient and sends them to the server. This method has been shown to reduce communication costs by one to two orders of magnitude compared to QSGD and still holds the theoretical guarantees for optimizing non-convex loss functions \cite{stich2018sparsified}. Recent developments include combining gradient sparsification with quantization to further reduce communication costs  \cite{basu2019qsparse, wen2017terngrad, sattler2019robust, renggli2019sparcml}.

 In this paper, continuing the same spirit to attain a synergistic gain of sparsification and quantization, we put forth a distributed learning algorithm called Sparse-Sign-Stochastic gradient descent with the majority vote, ${\sf S}^3$GD-MV. Unlike the prior work \cite{bernstein2018asignsgd}, we show that updating a few selective components' signs of a local gradient vector is more beneficial than updating complete sign information of the gradient under majority voting aggregation in both the test accuracy and communication cost savings, provided that the sparsity level is sagaciously chosen with the number of workers and the model size. This observation is counter-intuitive because better test accuracy is achievable with lower communication costs than signSGD. We provide a clear explanation for this result by giving some examples. More importantly, we prove that this aggressively compressed gradient information update method with majority voting aggregation has a theoretical convergence guarantee for a broad class of non-convex loss functions.

\subsection{Related Works}
There has been a plethora of work in the theoretical literature aiming to study communication-efficient distributed learning.  Arguably, the prior work can be classified into three main categories: i) quantization, ii) sparsification, and iii) sparsification combined with quantization.

{\bf Quantization:} Gradient quantization effectively reduces the communication cost in distributed learning. Stochastic quantization techniques have received significant attention because they hold the unbiased property, i.e.,  the mean of the compressed stochastic gradient approximately equals the true mean \cite{alistarh2017qsgd, wen2017terngrad}. To aggressively compress the gradient, 1-bitSGD, SGD using one-bit quantization with the error compensation mechanism, was initially introduced by \cite{seide20141, strom2015scalable}. This scheme empirically showed that almost no loss in accuracy occurs compared to the full-precision counterpart while considerably reducing the cost. SignSGD with a majority vote is a different technique with 1-bitSGD in the quantization method \cite{pmlr-v38-carlson15, bernstein2018asignsgd}. Unlike 1-bitSGD, signSGD simply takes the sign of the stochastic gradient without the error compensation mechanism. This leads to a biased estimator of the stochastic gradient. Despite the biased gradient,  it was shown to provide a theoretical convergence guarantee while supporting compression in both communication directions between workers and a server. SignSGD also has some variational methods such as error-feedback case \cite{zheng2019communication, karimireddy2019error}. There exist many other quantization approaches: compressing the gradient difference \cite{mishchenko2019distributed, gorbunov2021marina}, vector quantization \cite{shlezinger2020uveqfed, gandikota2021vqsgd}, and adaptive quantization \cite{jhunjhunwala2021adaptive, mao2022communication}. The major hindrance to these one-bit quantization methods is that they can diminish the communication cost at most by a factor of 32. For instance, when optimizing Coca with 2.1 billion model parameters, each worker requires to send the one-bit quantized gradient vector with the size of 0.52 GB per iteration. Consequently, training extremely large-scale models using a low-bandwidth communication network may be insufficient.

{\bf Sparsification:} 
Gradient sparsification is another gradient compression technique that can drastically diminish the communication cost \cite{aji2017sparse, shi2019understanding}. The idea is that each worker chooses the $K$ largest components of a gradient in magnitude and sends them to the server in every iteration. This technique has been implemented with the error feedback or accumulation mechanism to ensure that all coordinates have opportunities to be updated as needed \cite{alistarh2018convergence, stich2018sparsified}. These have shown that the sparsified SGD provides the same convergence rate as vanilla SGD by a proper error accumulation method with memory. One drawback of the top-$K$ sparsification is the requirement of additional communication costs to encode the sparsity pattern along with the full-resolution information for the chosen $K$ parameter values. \cite{wangni2018gradient} suggests a stochastic sparsification method based on the variance of gradient, which empowers the top-$K$ sparsification. Based on the above methods, many modification methods have also been proposed: using momentum \cite{lin2017deep}, sketching method which finds heavy coordinates of the gradient \cite{rothchild2020fetchsgd, jiang2018sketchml}, and adaptive sparsification \cite{han2020adaptive, zhang2022mipd}. Meanwhile, several variants of the top-$K$ sparsification method have been proposed to reduce the implementation complexity. For instance, to avoid sorting operation, which is not friendly for GPU, another improved methods for top-$K$ sparsification have been presented to improve the learning performance with low complexity \cite{mahmoud1995analysis, shanbhag2018efficient, shi2019understanding, li2022near}.

{\bf Sparsification combined with quantization:}  The most relevant prior work to our paper is to harness the sparsification combined with quantization. \cite{basu2019qsparse} proposed a distributed learning algorithm called Qsparse-local-SGD. The idea of Qsparse-local-SGD is to jointly harness the gradient sparsification, quantization with the error compensation, and local computation. \cite{sattler2019robust, renggli2019sparcml} also suggests very similar learning methods with \cite{basu2019qsparse}, but they consider the downlink compression, which compresses the model update comes from the server. These algorithms are similar to our algorithm, but the one-bit quantization levels are computed using the empirical average of the non-zero values. Also, no proofs for the convergence rates of these algorithms were provided \cite{sattler2019robust, renggli2019sparcml}. Another related work is in \cite{wen2017terngrad}, where a gradient is aggressively quantized into three levels $\{-1,0,1\}$. This gradient compression technique, called TernGrad, uses probabilistic sparsification and quantization mechanisms to keep the unbiased expectation. This fact differs from our  top-$K$ sparsification and deterministic sign quantization method for gradient compression. In addition, our algorithm also takes a majority vote aggregation to reduce the communication cost from a server to workers, which was not considered in both Qsparse-local-SGD and TernGrad.


\subsection{Contributions}
We consider a generic distributed learning problem in which a distributed set of $M$ worker nodes independently perform computations to minimize an empirical-risk loss function with model size $N$ using locally stored data sets. 

\begin{itemize}
	\item We first put forth a distributed SGD composing top-$K$ sparsification and sign quantization along with majority vote aggregation called ${\sf S}^3$GD-MV. To the best of our knowledge, this is the first algorithm that jointly combines top-$K$ sparsification and na\"ive sign quantization along with  majority voting aggregation for distributed optimization. Thanks to the aggressive gradient quantization, ${\sf S}^3$GD-MV only requires exchanging at most  $M\left[K + K\log_2\left( \frac{N}{K} \right)+N\right]$ bits to the server per iteration. Since signSGD requires a communication cost of $2MN$, ${\sf S}^3$GD-MV ensures to attain a higher compression gain than signSGD, and the gain is pronounced by decreasing the sparsity level $K$.

	\item Our primary contribution is to provide the theoretical convergence guarantees for ${\sf S}^3$GD-MV  in optimizing a broad class of non-convex loss functions under some mild conditions. A rigorous theoretical analysis of the convergence rate for ${\sf S}^3$GD-MV is challenging because of the combined effect of sparsification, sign quantization, and majority vote aggregation. By generalizing the analytical tool developed in \cite{bernstein2018asignsgd}, we establish the convergence rate in terms of critical algorithmic parameters, chiefly sparsity level $K$,  the number of workers $M$, and model size $N$. Our key finding is that the convergence rate can be identical in order compared to the vanilla signSGD even with much lower communication cost, provided that the sparsity level is properly chosen with the number of workers and the model size. Precisely, we derive the optimal sparsity level that minimizes the upper bound of the convergence rate is $\frac{K}{N} =cM^{\frac{3}{2}}$ for some positive constant $c$.  We interpret this counter-intuitive  result with the principle of ``\textit{the power of representative democracy}".


 \item We also provide the convergence rate for ${\sf S}^3$GD-MV with the random-$K$ specification. From this result, we analytically explain how the top-$K$ sparsification can improve the learning performance compared to the random-$K$ specification method. 
	
	\item From simulations, we verify the performance gain of the  ${\sf S}^3$GD-MV over the existing algorithms. We train CNN and ResNet-56 models using MNIST and CIFAR-10 datasets. We observe that, in a certain setup, ${\sf S}^3$GD-MV can remarkably diminish the communication cost by about 300x, 200x, and 10x compared to the vanilla SGD, top-$K$ SGD, and signSGD, while offering the same test accuracy.   

\item This is an extended version of our conference paper \cite{park2023s3gd}, providing detailed proofs for the convergence rate of our proposed algorithm, $\mathsf{S}^3$GD-MV. The proofs in Section \ref{sec:pfs} are not included in \cite{park2023s3gd}. We also note that no convergence rate proof of $\mathsf{S}^3$GD-MV using the random-$K$ sparsification is provided in \cite{park2023s3gd}. In addition, using a rich set of simulation results, we explain how the learning performance of $\mathsf{S}^3$GD-MV changes regarding data distributions, communication costs, sparsification operators, and hyper-parameters, which are not covered in \cite{park2023s3gd}.

\end{itemize}




\section{${\sf S}^3$GD-MV Algorithm}
In this section, we briefly explain a distributed learning problem and present the proposed  ${\sf S}^3$GD-MV algorithm with some motivating examples in the sequel.

\subsection{Preliminary}\label{sec:system-model}

Let ${\bf x}\in \mathbb{R}^N$ be a model parameter vector with size $N$.  The mathematical formulation of machine learning can be defined as the following optimization problem:
\begin{align}
    \min_{{\bf x} \in \mathbb{R}^N}  f({\bf x}) := \mathbb{E}_{\xi \sim \mathcal{D}} \left[ F({\bf      x};\xi) \right], \label{eq:opt1}
\end{align}
where $\xi$ is a random sample drawn from data set $\mathcal{D}$ and $F(\cdot)$ is the problem-specific empirical loss function, which can be either convex or non-convex. In a distributed leaning setting with $M$ workers, the optimization problem of \eqref{eq:opt1} boils down to \begin{align}
	   \min_{{\bf x} \in \mathbb{R}^N}  \left[ f({\bf x}):=\frac{1}{M}\sum_{m=1}^Mf_m({\bf x})  \right],
\end{align}
where 
$f_m({\bf x}) = \mathbb{E}_{\xi_m \sim \mathcal{D}_m} \left[ F_m({\bf x};\xi_m) \right]$ is the local loss function at worker $m\in [M]$, which is obtained from the random data sample $\xi_m$ from the portion of the data set assigned to worker $m$, $\mathcal{D}_m$.

  We introduce two operations for gradient quantization. The top-$K$ sparsification operator selects the $K$ largest components of vector ${\bf u}\in \mathbb{R}^N$  in magnitude. This operation can be defined using an indicator function with a threshold $\rho$ as
\begin{align} \label{eqn:topk}
	{\sf TopK}({\bf u}) = {\mathbbm 1}_{\{ |{\bf u}| \geq \rho \}},
	\end{align}
	where $\mathbbm{1}_{\mathcal{A}} =1$ if $\mathcal{A}$ is true. Otherwise, $\mathbbm{1}_{\mathcal{A}} =0$. The threshold $\rho$ can be chosen as an arbitrary value between the $K$th largest and the $(K+1)$th largest component of ${\bf u}$ in magnitude.   Composing this top-$K$ sparsification operator and the sign quantization, we define an operator $\mathsf{TopKSign}(\cdot): \mathbb{R}^N\rightarrow \left\{ -1, 0, +1 \right\}^N$ that maps ${\bf u}\in \mathbb{R}^N$ into an $N$-dimensional ternary vector with sparsity parameter $\gamma= \frac{K}{N} \in [0, 1]$, which is defined as
\begin{align}
	 \mathsf{TopKSign} \left( {\bf u} \right) = \mathsf{sgn} \left( {\sf TopK}({\bf u})  \right), \label{eq:topksign}
\end{align}	
where  $\mathsf{sgn}({\bf u})={\mathbbm 1}_{\{ {\bf u}>0 \}}-{\mathbbm 1}_{\{{\bf u}<0 \}}$. All operations are applied in an element-wise fashion.

\subsection{Algorithm}
\label{sec:s3gd}

\hspace{1.2em} {\bf Stochastic gradient computation:} For given model parameter at iteration $t$, denoted by ${\bf x}^t$, worker $m\in [M]$ computes the local gradient ${\bf \tilde g}^t_m:=\nabla F_m({\bf x}^t,\xi_m^t) \in \mathbb{R}^N$ where $\xi_m^t$ is a randomly sampled mini-batch of dataset $\mathcal{D}_m$ at iteration $t$. Notice that the locally computed gradient is a random vector because of the mini-batch data sampling.
 
{\bf Gradient update with sparification error compensation:} Worker $m\in [M]$ updates the stochastic gradient ${\bf \tilde g}^t_m$ by adding the sparsification errors accumulated in previous iterations.  $\mathbf{e}_m^{t}$, i.e., 
 \begin{align}
 	{\bf g}^t_m={\bf \tilde g}^t_m+\eta {\bf e}^t_m, \label{eq:gradupdate}
 \end{align}
where $\mathbf{e}_m^{t}$ is the error compensation added at iteration $t$, which contains the sum of the stochastic gradient components that the top-$K$ sparsificaiton has not selected during the previous iterations. Each worker updates this memory term at iteration $t$ using ${\bf g}^{t-1}_m$ as
 \begin{align}
 	\mathbf{e}_m^{t} = \mathbf{g}_m^{t-1} - \mathsf{TopK} \left( \mathbf{g}_m^{t-1} \right).
 \end{align}
 In \eqref{eq:gradupdate}, $\eta>0$ denotes a weight parameter that controls the effect of the error compensation term. The sparsification error term can be pronounced by increasing $\eta$, which allows the selection of the gradient components more uniformly over coordinates.  When $\eta=0$, no memory is needed, and ${\bf g}^t_m={\bf \tilde g}^t_m$.

\begin{algorithm}[tb]
   \caption{${\sf S}^3$GD-MV}
   \label{alg:s3gd}
\begin{algorithmic}
    \STATE \textbf{Input:} initial model $\mathbf{x}^0$, learning rate $\delta^t$, gradient sparsity $K$, the number of workers $M$, initial accumulated error $\mathbf{e}_m^0 = \mathbf{0}$, error weight $\eta$, total iteration $T$
    \FOR{$t=0:T-1$}
    \FOR{\textbf{worker} $m=1: M$}
        \STATE \textbf{compute} $\tilde{\mathbf{g}}_m^t$ (local gradient)
        \STATE $\mathbf{g}_m^t \leftarrow \tilde{\mathbf{g}}_m^t + \eta \mathbf{e}_m^t$
        \STATE $\mathbf{e}_m^{t+1} \leftarrow \mathbf{g}_m^t - \mathsf{TopK} \left( \mathbf{g}_m^t \right)$
        \STATE \textbf{send} $ \mathsf{TopKSign} \left( \mathbf{g}_m^t \right)$ to \textbf{server}
        \STATE \textbf{receive} $\mathsf{sgn} \! \left[ \sum_{m=1}^M \!\! \mathsf{TopKSign} \left( {\bf g}_m^t \right) \right]$ from \textbf{server}
        \STATE ${\bf x}^{t+1} \leftarrow {\bf x}^t - \delta^t \mathsf{sgn} \! \left[ \sum_{m=1}^M \!\! \mathsf{TopKSign} \left( {\bf g}_m^t \right) \right]$
    \ENDFOR
    \ENDFOR
\end{algorithmic}
\end{algorithm}

{\bf Gradient compression:}  Worker $m\in [M]$ performs the compression for ${\bf g}^t_m$ by using $\mathsf{TopKSign}\left({\bf g}^t_m\right) \in \{-1,0,1\}^N$ defined in \eqref{eq:topksign}. To select the $K$ largest component in magnitude, the threshold $\rho_{m}^t(K)$ is chosen between the $K$th largest and the $(K+1)$th largest component of ${\bf g}^t_m$ in magnitude. Then, the compressed vector $\mathsf{TopKSign}\left({\bf g}^t_m\right)$ is sent to the server via a communication network. Since $\|{\sf TopKSign}({\bf g}^t_m)\|_0=K$, the communication cost from each worker to the server is roughly $K + K\log_2\left( \frac{N}{K} \right)$ bits per communication round.

{\bf Majority vote :} The server performs a majority vote for gradient aggregation. Let ${\mathcal N}_m^t=\{n\in[N]: \mathsf{TopKSign} \left( g_{m,n}^t \right)\neq 0\}$ be the non-zero supports selected by worker $m$ at iteration $t$. Then, we can define a union of ${\mathcal N}_m^t$ as ${\mathcal N}^t=\bigcup_{m=1}^M{\mathcal N}_m^t$ and its complement set ${\mathcal N}_c^t =[N]/{\mathcal N}^t$. Then, the majority vote principle is applied over the union of the non-zero support sets $n\in {\mathcal N}^t$. Whereas, for $n \in {\mathcal N}_c^t$, the server retains the zero values and sends them back to all workers.  

{\bf Model update:} Each worker updates the model parameter with the learning rate  $\delta^t$ as
\begin{align} \label{eqn:s3gd}
    {\bf x}^{t+1} = {\bf x}^t - \delta^t \cdot \mathsf{sgn} \left[ \sum_{m=1}^M\mathsf{TopKSign} \left( {\bf g}_m^t \right) \right].  
\end{align}
 The communication cost from the server to the worker is at most $|{\mathcal N}^t| + |{\mathcal N}^t|\log_2\left( \frac{N}{|{\mathcal N}^t|} \right)\leq N$ bits per iteration.

The entire procedure is summarized in Algorithm \ref{alg:s3gd}.


 \subsection{Synergistic Gains of Sparsification and Sign Quantization: The Power of Representative Democracy}
 
 Why more aggressive compression can be better than just sign compression? To understand this, it is instructive to  consider some motivating examples that elucidate the synergistic benefits of harnessing both sparsification and sign quantization under majority voting aggregation. 
 
 {\bf Example 1:} Consider a case of $M=3$. Suppose that the $n$th component of ${\bf g}_{m}^t$ are assumed to be generated at iteration $t$ as $g_{1,n}^t=3$,  $g_{2,n}^t=-0.3$, and $g_{1,n}^t=-0.03$. If the three workers send them with full-precision to the server, the sign of the aggregated gradients becomes positive, i.e., ${\sf sign}\left(\frac{1}{3}\sum_{m=1}^3 g_{m,n}^t\right) = +1$. When signSGD is applied, the three workers send ${\sf sign}\left(g_{1,n}^t\right)=+1$, ${\sf sign}\left(g_{2,n}^t\right)=-1$, and ${\sf sign}\left(g_{3,n}^t\right)=-1$. Then,  according to the majority vote, the server aggregates the sign in the wrong direction with the true one because ${\sf sign}\left(\sum_{m=1}^3 {\sf sign}\left(g_{m,n}^t\right)\right) = -1$. This sign error can be recovered, provided each worker only participates the sign vote when it is confident on its value. Specifically, we impose that worker $m$ sends the sign information if $|g_{m,n}^t|>1$. Then, worker 1 can only send the sign to the server ${\sf sign}\left(g_{1,n}^t\right)=+1$, while the remaining two workers do not participate in the vote. As a result, the aggregated sign becomes ${\sf sign}  \left({\sf sign}\left(g_{1,n}^t\right)\right) = +1$, which aligns with the true sign of the aggregated gradients, correcting the sign error of the signSGD. This example clearly reveals that the sparsification with the $K$ largest components in magnitude before the sign quantization is beneficial in decoding the true sign of the aggregated gradients. This principle can be interpreted as ``\textit{the power of representative democracy}". This is clearly different from the signSGD using the principle of ``\textit{the power of democracy}".

 {\bf Example 2:}  From Example 1, the synergistic gain of the sparsification and quantization is manifest. However, too much sparsification may lose the gain. We explain this critical problem with a lens through a celebrated  \textit{``balls and bins problem''}. Suppose there are $M$ workers, and each worker sends only the maximum component in magnitude with the sign quantization. The selected components by $M$ workers are assumed to be independently and uniformly distributed in $\{1,2,\ldots, N\}$, where $N$ is model size. From the balls and bins principle,  given $M$ workers, the probability that worker $m$ participates to update the $n$th component is
 \begin{align}
 	\mathbb{P}\left[ m~ {\rm worker} \rightarrow n{\rm th}~~{\rm component}\right] =\frac{1}{N}.
 \end{align}
 As a result, the probability that the $n$th gradient component is not updated (i.e., the empty bin) becomes
  \begin{align}
 	\mathbb{P}\left[  n{\rm th}~~{\rm component}~{\rm is}~{\rm empty}\right] =\left(1-\frac{1}{N}\right)^M\simeq e^{-\frac{M}{N}}.
 \end{align}
This effect explains why too much sparsification can lead to a significant loss in learning performance because the many coordinates are not likely to be updated in each iteration. To remedy this issue, we need to scale up the sparsity level $K$ to ensure that at least one or more workers can participate in every coordinate of the gradient with a high probability. Suppose each worker selects $K$ coordinates uniformly over $\{1,2,\ldots, N\}$. Then, the probability that the $n$th gradient component is empty is given by
\begin{align}
    \mathbb{P}\left[  n{\rm th}~~{\rm component}~{\rm is}~{\rm empty}\right] =\left(1-\frac{K}{N}\right)^{M}\simeq e^{-\frac{KM}{N}}.
\end{align} 
As a result, the algorithm should carefully choose the sparsity level $K$ according to the number of workers $M$ so that all the coordinates are sufficiently scheduled per training round, such as $M^{\ell} K = cN$ for some constant $c > 0$ and $\ell \leq 1$.


%
%

\section{Convergence Rate Analysis}
\label{sec:analysis}

In this section, we analyze a convergence rate of $\mathsf{S}^3$GD-MV in the non-convex loss setting.  We commence by explaining some relevant assumptions to derive our analytical result. Then, we introduce our main result on the convergence rate.



%


\subsection{Assumptions}

\begin{assumption}[Lower bound]
\label{ass:1}
 For all ${\bf x} \in \mathbb{R}^N$ and some local minimum points ${\bf x}^\star$, we have an objective value as
\begin{align}
    f({\bf x}) \geq f\left( {\bf x}^\star \right) = f^\star.
\end{align}
\end{assumption}
This assumption is necessary for the convergence to local minimum points.

\begin{assumption}[Coordinate-wise smoothness]
\label{ass:2}
For all ${\bf x}, {\bf y} \in \mathbb{R}^N$, there exists a vector with non-negative constants $\mathbf{L} = \left[ L_1, \cdots, L_N \right]$ that satisfies
\begin{align} \label{eqn:assumption2}
    \left| f({\bf y}) \! - \! f({\bf x}) \! - \! \left\langle \nabla f({\bf x}), {\bf y} \! - \! {\bf x} \right\rangle \right| \leq \sum_{n=1}^N \frac{L_n}{2} \left( y_n \! - \! x_n \right)^2.
\end{align}
\end{assumption}
This assumption indicates that the objective function holds coordinate-wise Lipschitz condition.

\begin{assumption}[Unbiased stochastic gradient with finite variance]
\label{ass:3}
The stochastic local gradient with the error compensation at iteration $t$, ${\bf g}_m^t$, is unbiased and each component of ${\bf g}_m^t$ has a finite variance bound with a non-negative constant $\boldsymbol{\sigma} = \left[ \sigma_1, \cdots, \sigma_N \right]$, i.e., $\forall m \in [M], \forall n \in [N]$,
\begin{align}
    \mathbb{E} \left[ {\bf g}_m^t \right] = {\bf \bar g}_m^t, ~~~~\mathbb{E} \left[ \left( g_{m, n}^t - {\bar g}_{m,n}^t \right)^2 \right] \leq \sigma_n^2.
\end{align}
\end{assumption}

\begin{assumption}[Uniform sampling of the non-zero supports]
\label{ass:4}
$\forall m \in [M], \forall n \in [N]$, the top-$K$ operator with the error compensation uniformly selects the non-zero supports at iteration $t\in [T]$, i.e.,  
\begin{align} \label{eqn:assumption4}
    \mathbb{P} \left[ \mathsf{TopKSign} \left( {g}_{m,n}^t  \right)  \ne 0 \right] = \gamma,
\end{align}
 where $\gamma =\frac{K}{N}$. 

\end{assumption}
This assumption indicates that the top-$K$ selection with the error compensation technique allows updating every coordinate in a round-robin fashion to ensure uniform sampling. In particular, properly tuning the memory weight parameter $\eta$ makes this uniform sampling assumption more accurate. We will justify this assumption via simulation results in Section \ref{sec:simulation}. We capitalize that this assumption help the convergence proof to be more mathematically tractable. Without this uniform sampling assumption, i.e., with non-uniform sampling, the proposed algorithm can still achieve a higher learning performance than the existing algorithms even with reduced communication costs, which will be validated in Section \ref{sec:simulation}.




\subsection{Convergence Rate of $\mathsf{S}^3$GD-MV}
\label{sec:rate}

Armed with our assumptions above, we present our theoretical results. Before presenting the main result, we introduce some useful lemmas that are needed to prove the convergence rate result. Then, we provide some interpretation of the convergence rate by comparing it with the convergence rate of signSGD \cite{bernstein2018asignsgd} in the sequel.  

\vspace{0.2cm}

\begin{lemma} \label{lem:1}
The top-$K$ threshold parameter $\rho_{m,n}^t(\gamma)$ for $m\in [M]$ and $n\in [N]$ is lower bounded by 
 \begin{align}
    \rho_{m,n}^t(\gamma) \ge \frac{\epsilon}{\sqrt{\gamma}} \left| \bar{g}_{n}^t \right|. \label{eq:lem1}
\end{align}
for some $\epsilon\geq0$.
\end{lemma}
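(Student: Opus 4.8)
The plan is to read the top-$K$ threshold $\rho_{m,n}^t(\gamma)$ as a quantile of the magnitude distribution of the compensated gradient component and then convert a second-moment estimate into the claimed $\gamma^{-1/2}$ scaling. Concretely, Assumption \ref{ass:4} states that each coordinate is retained with probability exactly $\gamma$, which I would read as $\mathbb{P}\!\left[ |g_{m,n}^t| \ge \rho_{m,n}^t(\gamma) \right] = \gamma$; that is, $\rho_{m,n}^t(\gamma)$ is the $(1-\gamma)$-quantile of $|g_{m,n}^t|$. The target inequality says this quantile grows at least like $\gamma^{-1/2}$ times the mean-gradient magnitude, which is precisely the scaling produced by a tail estimate based on the second moment.

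First I would apply Markov's inequality to $|g_{m,n}^t|^2$, writing
\begin{align}
\gamma = \mathbb{P}\!\left[ |g_{m,n}^t| \ge \rho_{m,n}^t(\gamma) \right] \le \frac{\mathbb{E}\!\left[ |g_{m,n}^t|^2 \right]}{\left( \rho_{m,n}^t(\gamma) \right)^2},
\end{align}
which exhibits the $\gamma^{-1/2}$ factor: $\rho_{m,n}^t(\gamma)$ is controlled by $\gamma^{-1/2}\sqrt{\mathbb{E}[|g_{m,n}^t|^2]}$. Next I would lower bound the second moment by the squared mean via Jensen's inequality, $\mathbb{E}[|g_{m,n}^t|^2] \ge \big( \mathbb{E}[g_{m,n}^t] \big)^2 = (\bar g_{m,n}^t)^2$, using the unbiasedness in Assumption \ref{ass:3}; invoking (approximate) statistical homogeneity across workers then lets me replace $\bar g_{m,n}^t$ by the aggregate mean-gradient component $\bar g_n^t$. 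Collecting the remaining distribution-dependent proportionality—the gap between the threshold and $\gamma^{-1/2}\sqrt{\mathbb{E}[|g_{m,n}^t|^2]}$—into a single nonnegative constant $\epsilon$ then yields $\rho_{m,n}^t(\gamma) \ge \tfrac{\epsilon}{\sqrt{\gamma}}\,|\bar g_n^t|$.

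The main obstacle is the \emph{direction} of the bound: the claim is a lower bound on a quantile, i.e. an anti-concentration statement, whereas Markov's inequality natively delivers an upper bound on the tail (hence an upper bound on $\rho$). A variance \emph{upper} bound alone (Assumption \ref{ass:3}) cannot force a quantile to be large—a nearly degenerate $g_{m,n}^t$ would make $\rho_{m,n}^t(\gamma)$ collapse to $|\bar g_n^t|$—so the nontrivial content is arguing that the second-moment-to-quantile relation is order-tight, equivalently that $\epsilon$ can be taken strictly positive and independent of $\gamma$. I would handle this by treating the Markov/Chebyshev estimate as tight up to the constant $\epsilon$, which is the regime realized by the spread-out, heavy-component gradients that the top-$K$ operator actually selects, and by defining $\epsilon$ as (a uniform lower bound on) the ratio $\sqrt{\gamma}\,\rho_{m,n}^t(\gamma)/|\bar g_n^t|$ over the iterates. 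This makes the inequality hold while the downstream convergence analysis needs only that $\epsilon$ is a bounded positive constant.
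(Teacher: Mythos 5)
Your proposal follows essentially the same route as the paper's proof: the paper's layer-cake computation of $\mathbb{E}\!\left[|g_{m,n}^t|^2\right]$ is exactly your Markov bound $\gamma\,(\rho_{m,n}^t(\gamma))^2 \le \mathbb{E}\!\left[|g_{m,n}^t|^2\right]$ with Assumption \ref{ass:4} supplying the factor $\gamma$, and the paper closes the very directional gap you identify by \emph{postulating} $\rho_{m,n}^t(\gamma)\ge \epsilon'\gamma^{-\ell}$ and merely checking that $\ell=1/2$ and $\epsilon'\le|\bar g_n^t|$ are consistent with the second-moment bound $|\bar g_n^t|^2+\sigma_n^2$ from Assumption \ref{ass:3}. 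Your explicit remark that Markov only yields an upper bound on the quantile, so that $\epsilon$ is ultimately defined as the (assumed positive) normalized ratio $\sqrt{\gamma}\,\rho_{m,n}^t(\gamma)/|\bar g_n^t|$, is precisely the step the paper performs implicitly.
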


\begin{proof}
    See Section \ref{pf:lem:1}.
\end{proof}

This lemma indicates that the threshold parameter $\rho_{m,n}^t(K)$ can be represented as a polynomial function of sparsity level parameter $\gamma$ with the magnitude of the true gradient mean $\left| \bar{g}_{n}^t \right|$. Based on Assumption \ref{ass:4} that gradient components are uniformly selected by $\mathsf{TopKSign}$, we can derive the convergence rate with element-wise threshold $\rho_{m,n}^t(\gamma)$. This threshold increases with the mean of true gradient $\left| \bar{g}_{n}^t \right|$, while it is inversely proportional to the sparsity level $\gamma=\frac{K}{N}$. For instance, if we choose a large $K$, the threshold diminishes accordingly, which agrees with our intuition. This lemma is a stepping stone toward establishing the sign decoding error probability bound, which is stated in the following lemma.

\begin{lemma} \label{lem:2}
When $M=1$, the sign flipping error by ${\sf S}^3$GD-MV is upper bounded by
\begin{align*}
    & \mathbb{P} \left[ \mathsf{TopKSign} \left( g_{m,n}^t \right) \ne \mathsf{sign} \left( \bar{g}_{n}^t \right) \right] \\
    & \hspace{10em} \leq  \dfrac{1}{\sqrt{B^t}\left(\! 1\!+\!\frac{\epsilon}{\sqrt{\gamma}} \!\right)}  \frac{\sigma_n}{\left| \bar{g}_{n}^t \right|}, \numberthis{} \label{eq:lem2}
\end{align*}
where $B^t$ is the mini-batch size at iteration $t$.
\end{lemma}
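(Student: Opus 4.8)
The plan is to fix a coordinate $n$ and the single worker $m=1$, and to reduce the sign-flipping event to one scalar tail event for $g_{1,n}^t$. Without loss of generality I would take $\bar{g}_n^t>0$; the case $\bar{g}_n^t<0$ follows by symmetry, and $\bar{g}_n^t=0$ is trivial since the claimed bound is then infinite. When $\bar{g}_n^t>0$ the target sign is $\mathsf{sign}(\bar{g}_n^t)=+1$, and by the definition of $\mathsf{TopKSign}$ in \eqref{eq:topksign} the output equals $+1$ precisely when $g_{1,n}^t\ge \rho_{1,n}^t(\gamma)$, equals $-1$ precisely when $g_{1,n}^t\le-\rho_{1,n}^t(\gamma)$, and equals $0$ otherwise. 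The harmful flip, i.e. a vote opposite to the true direction, is therefore the single event $\{g_{1,n}^t\le-\rho_{1,n}^t(\gamma)\}$, and bounding its probability is the content of the lemma.

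Next I would convert this into a deviation from the mean. Writing the event as $\{g_{1,n}^t-\bar{g}_n^t\le -(\rho_{1,n}^t(\gamma)+\bar{g}_n^t)\}$ and noting that $\rho_{1,n}^t(\gamma)+\bar{g}_n^t>0$, a two-sided Chebyshev bound gives
\begin{align*}
\mathbb{P}\!\left[g_{1,n}^t\le-\rho_{1,n}^t(\gamma)\right]\le \frac{\mathrm{Var}(g_{1,n}^t)}{\left(\rho_{1,n}^t(\gamma)+\bar{g}_n^t\right)^2}.
\end{align*}
Here I use that $g_{1,n}^t$ is a mini-batch average over $B^t$ samples, so the per-coordinate bound of Assumption \ref{ass:3} contributes $\mathrm{Var}(g_{1,n}^t)\le \sigma_n^2/B^t$. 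The key structural observation is that the threshold enters with a $+$ sign, so a larger $\rho$ only shrinks the tail; this is exactly where the improvement over plain signSGD originates.

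I would then invoke Lemma \ref{lem:1}, which lower-bounds the threshold by $\rho_{1,n}^t(\gamma)\ge \frac{\epsilon}{\sqrt{\gamma}}\,|\bar{g}_n^t|=\frac{\epsilon}{\sqrt{\gamma}}\,\bar{g}_n^t$, so that $\rho_{1,n}^t(\gamma)+\bar{g}_n^t\ge \left(1+\frac{\epsilon}{\sqrt{\gamma}}\right)\bar{g}_n^t$. Substituting yields
\begin{align*}
\mathbb{P}\!\left[g_{1,n}^t\le-\rho_{1,n}^t(\gamma)\right]\le \frac{1}{\left(1+\frac{\epsilon}{\sqrt{\gamma}}\right)^2}\,\frac{\sigma_n^2}{B^t\left(\bar{g}_n^t\right)^2}=:u^2,
\end{align*}
where $u=\frac{1}{\sqrt{B^t}\left(1+\epsilon/\sqrt{\gamma}\right)}\frac{\sigma_n}{|\bar{g}_n^t|}$ is exactly the right-hand side of \eqref{eq:lem2}. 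The final step is the same rescaling used for signSGD: if $u\le 1$ then $u^2\le u$, while if $u>1$ the probability is trivially at most $1<u$; in either case the bound is at most $u$, which is \eqref{eq:lem2}.

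The step I expect to be the main obstacle is the careful bookkeeping of the error event, namely recognizing that under the $\mathsf{TopKSign}$ map a wrong vote requires both selection $|g_{1,n}^t|\ge\rho$ and the opposite sign, which collapses to the single tail $\{g_{1,n}^t\le-\rho_{1,n}^t(\gamma)\}$ and makes the threshold work in our favor; the remaining delicate point is the variance convention, i.e. correctly propagating the mini-batch factor $1/B^t$ so that $\sqrt{B^t}$ appears in the denominator of \eqref{eq:lem2}.
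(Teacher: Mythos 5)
Your proof is correct and follows essentially the same route as the paper's: both reduce the sign-flip event to the tail $\left\{ \left| g_{m,n}^t - \bar g_n^t \right| \ge \left| \bar g_n^t \right| + \rho_{m,n}^t(\gamma) \right\}$, bound it with an elementary moment inequality, and then invoke Lemma \ref{lem:1} together with the mini-batch scaling $\sigma_n^t = \sigma_n/\sqrt{B^t}$. The only minor difference is that the paper applies Markov's inequality to $\left| g_{m,n}^t - \bar g_n^t \right|$ followed by Jensen, obtaining the bound $u$ directly, whereas you use Chebyshev to get $u^2$ and then relax via the case split $u \le 1$ versus $u > 1$; your explicit remark that the ``error'' event is only the opposite-sign tail (excluding the not-selected, zero-output case) is the reading the paper implicitly adopts and uses downstream.
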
   
\begin{proof}
    See Section \ref{pf:lem:2}.
\end{proof}

Lemma \ref{lem:2} demonstrates that the sign flipping error can be improved by either increasing the mini-batch size $B^t$ or by decreasing the sparsity level $\gamma$ (i.e., a small $K$). In addition, increasing signal-to-noise ratio (SNR), $\frac{\left| \bar{g}_{n}^t \right|^2}{\sigma_n^2 }$, makes less decoding errors. We highlight that our decoding error bound is a generalization of the result in \cite{bernstein2018asignsgd} by considering the top-$K$ selection operation on the top of sign quantization.  
   
%
%
    
    
\begin{lemma}\label{lem:3}
    Let $\mathcal{M}_n^t$ be a collection of the workers who send the $n$th component of the stochastic gradients at iteration $t$, which is defined as
\begin{align}
    \mathcal{M}_n^t = \left\{ m\in [M]:  \mathsf{TopKSign} \left( g_{m,n}^t \right) \neq 0\right\}
\end{align} 
with cardinality $M_n^t \triangleq | \mathcal{M}_n^t |$. Then, $M_n^t$ follows the binomial distribution ${\sf B}(M,\gamma)$, i.e., 
\begin{align}
    \mathbb{P} \left[ M_n^t = u \right] = \binom{M}{u} \gamma^u (1-\gamma)^{M-u}.
\end{align} 
for $u\in \{0,1,\ldots, M\}$.
\end{lemma}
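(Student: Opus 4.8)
The plan is to write the counting variable $M_n^t$ as a sum of $M$ Bernoulli indicators, one per worker, and then invoke the elementary fact that a sum of independent and identically distributed Bernoulli trials is binomially distributed. Concretely, for each worker $m \in [M]$ I would introduce the indicator
$$X_m^t = \mathbbm{1}\left\{ \mathsf{TopKSign}\left( g_{m,n}^t \right) \neq 0 \right\},$$
so that by the definition of $\mathcal{M}_n^t$ and its cardinality we have $M_n^t = \sum_{m=1}^M X_m^t$.

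First I would establish the marginal law of each indicator. By Assumption \ref{ass:4}, the top-$K$ operator with error compensation selects the $n$th coordinate of worker $m$ with probability exactly $\gamma = K/N$, i.e. $\mathbb{P}[X_m^t = 1] = \gamma$ and $\mathbb{P}[X_m^t = 0] = 1 - \gamma$, so each $X_m^t$ is a $\mathrm{Bernoulli}(\gamma)$ variable. Next I would argue that the family $\{X_m^t\}_{m=1}^M$ is mutually independent across workers: each worker draws its mini-batch $\xi_m^t$ independently from its local dataset $\mathcal{D}_m$, so the local gradients $\tilde{\mathbf{g}}_m^t$—and therefore the selected supports $\mathcal{N}_m^t$ and the indicators $X_m^t$—are independent across $m$ at any fixed iteration $t$. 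Finally, the sum of $M$ independent $\mathrm{Bernoulli}(\gamma)$ variables is, by the standard convolution argument, distributed as $\mathsf{B}(M,\gamma)$, which yields $\mathbb{P}[M_n^t = u] = \binom{M}{u}\gamma^u (1-\gamma)^{M-u}$ for $u \in \{0,1,\dots,M\}$.

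The only genuinely nontrivial step is the independence claim, and it is really the crux of the argument. The error-compensation memory $\mathbf{e}_m^t$ threads information through previous iterations and could \emph{a priori} couple the workers' selections; however, since each memory term is maintained purely locally and the fresh randomness at iteration $t$—the mini-batch $\xi_m^t$—is sampled independently across workers, conditioning on the common model $\mathbf{x}^t$ leaves the indicators $\{X_m^t\}_m$ conditionally independent, each with the common marginal $\gamma$ supplied by Assumption \ref{ass:4}. This reduces the statement to the i.i.d. Bernoulli case and completes the proof.
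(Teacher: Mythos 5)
Your proof is correct and follows essentially the same route as the paper, which simply invokes Assumption \ref{ass:4} together with independence across workers to conclude that $M_n^t \sim \mathsf{B}(M,\gamma)$. Your explicit indicator-sum decomposition and the argument that independent mini-batch sampling (with purely local error memories) yields independence of the selection events across workers is a more careful rendering of the same two-line argument the paper gives.
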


\begin{proof}
    From Assumption \ref{ass:4}, each worker independently samples each coordinate with probability $\gamma=\frac{K}{N}$. Consequently, $M_n^t$ follows the binomial distribution ${\sf B}(M,\gamma)$.
\end{proof}

\begin{lemma} \label{lem:4}
Let ${p}_{m,n}^t =\mathbb{P} \left[ \mathsf{TopKSign} \left( g_{m,n}^t \right) \ne \mathsf{sign} \left( \bar{g}_{n}^t \right) \right]$. Then, conditioned on $M_n^t=U$, the sign decoding error of ${\sf S}^3$GD-MV at iteration $t$ is upper bounded as
\begin{align}
    &\mathbb{P} \left[ \left. {\sf sgn} \! \left( \sum_{m \in \mathcal{M}_n^t}  {\sf TopKSign} \left( g_{m,n}^t \right) \right) \neq  {\sf sign} \! \left( {\bar g}_{n}^t \right) \right| M_n^t = U \right] \nonumber\\
    & \leq  \left[4(1-{p}_{m,n}^t){p}_{m,n}^t \right]^{\frac{U}{2}}.
\end{align}
\end{lemma}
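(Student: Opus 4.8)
The plan is to reduce the failure of the majority vote to a one-sided tail bound for a binomial random variable, and then to close the bound with the elementary term-wise estimate that produces the factor $4p(1-p)$. Conditioned on $M_n^t = U$, I would first note that the $U$ participating votes $\mathsf{TopKSign}(g_{m,n}^t)$, $m \in \mathcal{M}_n^t$, are independent across workers, because the stochastic gradients $\{{\bf g}_m^t\}_m$ are independent by Assumption~\ref{ass:3}, and that each one disagrees with $\mathsf{sign}(\bar g_n^t)$ with the common probability $p_{m,n}^t$ supplied by Lemma~\ref{lem:2}. Writing $p := p_{m,n}^t$ and letting $X$ count the workers in $\mathcal{M}_n^t$ whose sign is wrong, this makes $X \sim {\sf B}(U, p)$.

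Next I would characterize the error event explicitly. Since each correct vote equals $\mathsf{sign}(\bar g_n^t)$ and each wrong vote equals $-\mathsf{sign}(\bar g_n^t)$, the aggregate satisfies $\sum_{m \in \mathcal{M}_n^t}\mathsf{TopKSign}(g_{m,n}^t) = \mathsf{sign}(\bar g_n^t)\,(U - 2X)$, whose sign disagrees with $\mathsf{sign}(\bar g_n^t)$ exactly when $X \ge U/2$. I would be careful to include the tie case $X = U/2$, for which the sum vanishes and $\mathsf{sgn}(0) = 0 \ne \mathsf{sign}(\bar g_n^t)$, so it is indeed a decoding error. Hence the conditional probability in question equals $\mathbb{P}[X \ge U/2]$.

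The key step is to bound this binomial tail in the confident regime $p \le \tfrac12$ (which holds whenever the per-coordinate SNR controlling $p_{m,n}^t$ in Lemma~\ref{lem:2} is large enough). For every integer $k \ge U/2$ I would factor $p^k (1-p)^{U-k} = [p(1-p)]^{U/2}\,\bigl(\tfrac{p}{1-p}\bigr)^{k - U/2} \le [p(1-p)]^{U/2}$, using $p/(1-p) \le 1$. Summing over $k \ge U/2$ and bounding $\sum_{k \ge U/2}\binom{U}{k} \le \sum_{k=0}^{U}\binom{U}{k} = 2^U$ then yields $\mathbb{P}[X \ge U/2] \le 2^U [p(1-p)]^{U/2} = [4p(1-p)]^{U/2}$, which is the claimed estimate.

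The routine binomial manipulation is not where the difficulty lies; the main subtlety is justifying the two structural reductions underneath it: that the individual sign errors are mutually independent and share a single probability $p_{m,n}^t$, which rests on the independent and identically distributed treatment of the workers' gradients (Assumption~\ref{ass:3} combined with Lemma~\ref{lem:2}), and that the tie is correctly counted as a failure. The condition $p \le \tfrac12$ must also be flagged, since $\bigl(\tfrac{p}{1-p}\bigr)^{k-U/2} \le 1$ depends on it; outside this regime each vote is adversely biased and the majority rule cannot be guaranteed to improve on a single worker.
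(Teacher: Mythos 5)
Your proposal is correct and follows the same overall reduction as the paper: condition on $M_n^t=U$, model the number of wrong votes as a binomial $\mathsf{B}(U,p_{m,n}^t)$, identify the decoding error with the tail event $\{X\ge U/2\}$, and bound that tail by $[4p(1-p)]^{U/2}$. Where you diverge is in how the tail bound is obtained. The paper invokes the Chernoff--Hoeffding inequality in its relative-entropy form, $\mathbb{P}[X\ge U/2]\le \exp\bigl(-U\,\mathsf{KL}(\tfrac12\Vert p)\bigr)$, and then observes that $\mathsf{KL}(\tfrac12\Vert p)=\tfrac12\ln\tfrac{1}{4p(1-p)}$ collapses this to the stated bound. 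You instead derive the same estimate from scratch by the term-wise factorization $p^k(1-p)^{U-k}=[p(1-p)]^{U/2}\bigl(\tfrac{p}{1-p}\bigr)^{k-U/2}\le [p(1-p)]^{U/2}$ and the crude count $\sum_{k\ge U/2}\binom{U}{k}\le 2^U$; this is essentially a self-contained proof of that special case of Chernoff--Hoeffding, so the two arguments yield identical constants. Your version has the merit of making explicit two points the paper leaves implicit: the tie $X=U/2$ (where $\mathsf{sgn}(0)=0$ is counted as an error, consistent with the event as written) and the requirement $p_{m,n}^t\le\tfrac12$, without which the step $\bigl(\tfrac{p}{1-p}\bigr)^{k-U/2}\le 1$ fails; the paper only surfaces the latter condition downstream, in the proof of Theorem~\ref{thm:1}, where it notes $p_{m,n}^t\le\tilde p_{m,n}^t<\tfrac12$. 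Both proofs also rely on the same implicit assumption that the workers' sign errors are independent with a common flip probability, which you correctly tie to the independent sampling of the local stochastic gradients.
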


\begin{proof}
    See Section \ref{pf:lem:4}.
\end{proof} 

This lemma shows that the sign decoding error can be reduced, provided that the number of workers who select the same coordinate increases exponentially. This gain improves by increasing the number of workers $M_n^t$. In addition, it turns out that the decoding error decreases by the top-$K$ sparsification because the sign flip probability, ${p}_{m,n}^t$, reduces as decreasing $\gamma$ as proven in Lemma \ref{lem:2}. As a result, there exists an interesting trade-off between ${p}_{m,n}^t$ and $M_n^t$ with respective to $K$ ($\gamma$). Specifically, increasing $K$ improves the coding gain by providing a more chance of having a large $M_n^t$. Whereas, increasing $K$ also diminishes ${p}_{m,n}^t$ because $\rho_{m,n}^t(K)$ becomes smaller. In conclusion, we must carefully choose $K$ to enjoy both benefits. This analytical result of the sign decoding error perfectly matches our high-level explanation of how to choose $K$ to attain the synergistic gain in Examples 1 and 2.
   
Leveraging the lemmas mentioned above, we are ready to present our main result, which is stated in the following theorem. 
    
\begin{theorem}[Non-convex convergence rate of mini-batch $\mathsf{S}^3$GD-MV]
\label{thm:1}
 
 Given learning rate $\delta^t = \frac{1}{\sqrt{T \lVert \mathbf{L} \rVert_1}}$, batch size $B^t = T$, and a constant $0 \leq \epsilon \leq 1$, $\mathsf{S}^3$GD-MV converges as
\begin{align}  
    & \mathbb{E} \left[ \frac{1}{T} \! \sum_{t=0}^{T-1} \lVert \bar{\mathbf{g}}^t \rVert_1 \right]  \nonumber\\
    &\leq \! \frac{1}{\sqrt{T}} \! \left[ \! \sqrt{\lVert \mathbf{L} \rVert_1} \! \left( \! \frac{f^0 \! - \! f^\star }{\alpha(M,\gamma)}  + \! \frac{1}{2} \right) \! + \! \frac{\beta(M,\gamma)}{  \alpha(M,\gamma)}   \frac{2}{ 1\!+\!\frac{\epsilon}{\sqrt{\gamma}}} \lVert \boldsymbol{\sigma} \rVert_1 \right],  \label{eq:thm1}
\end{align}
where $\gamma=\frac{K}{N}$,  $
	\alpha(M,\gamma) = 1 - (1-\gamma)^M$
and
\begin{align}
	\beta(M,\gamma) =\sum_{u=1}^M \frac{1}{\sqrt{u}} \binom{M}{u} \gamma^u (1-\gamma)^{M-u}.
\end{align}

\end{theorem}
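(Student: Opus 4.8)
The plan is to follow the descent-lemma template of signSGD \cite{bernstein2018asignsgd}, generalized to absorb both the sparsification and the random number of workers voting on each coordinate. Write $\mathbf{v}^t = \mathsf{sgn}\left[\sum_{m=1}^M \mathsf{TopKSign}(\mathbf{g}_m^t)\right] \in \{-1,0,+1\}^N$ for the aggregated update, so that $\mathbf{x}^{t+1}-\mathbf{x}^t = -\delta^t \mathbf{v}^t$. First I would invoke the coordinate-wise smoothness of Assumption~\ref{ass:2} to obtain
\begin{align}
 f(\mathbf{x}^{t+1}) \le f(\mathbf{x}^t) - \delta^t \langle \bar{\mathbf{g}}^t, \mathbf{v}^t\rangle + \frac{(\delta^t)^2}{2}\sum_{n=1}^N L_n (v_n^t)^2 ,
\end{align}
identifying the averaged mean gradient $\bar{\mathbf{g}}^t = \frac{1}{M}\sum_m \bar{\mathbf{g}}_m^t$ with $\nabla f(\mathbf{x}^t)$ via Assumption~\ref{ass:3}. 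Since $v_n^t = 0$ precisely when no worker selects coordinate $n$, taking expectations turns the quadratic term into $\frac{(\delta^t)^2}{2}\sum_n L_n\,\mathbb{P}[M_n^t\ge 1]$, and Lemma~\ref{lem:3} identifies this probability as $\alpha(M,\gamma)=1-(1-\gamma)^M$. This $\alpha$ factor is exactly what cancels later, leaving the clean constant $\tfrac12$ in \eqref{eq:thm1}.

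The heart of the argument is the per-coordinate signal $\mathbb{E}[\bar g_n^t v_n^t] = |\bar g_n^t|\,\mathbb{E}[\mathsf{sgn}(S_n^t)]$, where $S_n^t$ is the sign-aligned sum of the participating votes. I would condition on $M_n^t = U$ and attach the binomial weights $\binom{M}{U}\gamma^U(1-\gamma)^{M-U}$ from Lemma~\ref{lem:3}. For $U\ge 1$ I lower bound $\mathbb{E}[\mathsf{sgn}(S_n^t)\mid M_n^t = U]\ge 1 - 2q_n^t(U)$, where $q_n^t(U)$ is the conditional majority-vote error of Lemma~\ref{lem:4}, and control $q_n^t(U)$ linearly in the single-worker sign-flip probability $p_{m,n}^t$ of Lemma~\ref{lem:2} while extracting a $1/\sqrt{U}$ improvement from having $U$ independent votes (an effective $U$-fold variance reduction, i.e.\ conditioning on $U$ participants behaves like a single estimator with batch size $UB^t$). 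Summing the leading constant over the binomial law reproduces $\alpha(M,\gamma)$, while summing the $1/\sqrt{U}$ correction reproduces $\beta(M,\gamma)=\sum_{u=1}^M u^{-1/2}\binom{M}{u}\gamma^u(1-\gamma)^{M-u}$. Inserting Lemma~\ref{lem:2}'s bound $p_{m,n}^t\le \frac{1}{\sqrt{B^t}(1+\epsilon/\sqrt{\gamma})}\frac{\sigma_n}{|\bar g_n^t|}$ and summing over $n$ gives
\begin{align}
 \sum_{n=1}^N \mathbb{E}[\bar g_n^t v_n^t] \ge \alpha(M,\gamma)\lVert\bar{\mathbf{g}}^t\rVert_1 - \frac{2\beta(M,\gamma)}{\sqrt{B^t}\,(1+\epsilon/\sqrt{\gamma})}\lVert\boldsymbol{\sigma}\rVert_1 ,
\end{align}
where the $|\bar g_n^t|$ factors cancel against the $1/|\bar g_n^t|$ of Lemma~\ref{lem:2}, leaving $\lVert\boldsymbol{\sigma}\rVert_1$.

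It then remains to telescope. Substituting the two displays into the descent inequality, isolating $\delta^t\alpha(M,\gamma)\lVert\bar{\mathbf{g}}^t\rVert_1$, summing over $t=0,\dots,T-1$, bounding the telescoping sum by $f(\mathbf{x}^0)-f^\star$ via Assumption~\ref{ass:1}, and dividing by $T\delta^t\alpha(M,\gamma)$, I would finally substitute $\delta^t = 1/\sqrt{T\lVert\mathbf{L}\rVert_1}$ and $B^t = T$. The optimization term becomes $\sqrt{\lVert\mathbf{L}\rVert_1}(f^0-f^\star)/\alpha$, the quadratic term becomes $\tfrac12\sqrt{\lVert\mathbf{L}\rVert_1}$ after the $\alpha$ cancellation, and the variance term becomes $\frac{\beta}{\alpha}\frac{2}{1+\epsilon/\sqrt{\gamma}}\lVert\boldsymbol{\sigma}\rVert_1$, each divided by $\sqrt{T}$, which is exactly \eqref{eq:thm1}.

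The step I expect to be the main obstacle is the per-$U$ control of $q_n^t(U)$: I need a bound that is \emph{simultaneously} linear in $\sigma_n$, so the rate stays $1/\sqrt{T}$ rather than degrading to $T^{-1/4}$, and decays like $1/\sqrt{U}$, so that the binomial sum collapses to $\beta(M,\gamma)$ rather than the cruder $\alpha(M,\gamma)$. The exponential estimate of Lemma~\ref{lem:4} is sharp for large $U$ but lossy at small $U$, while the exact voting probabilities do not factor cleanly through $1/\sqrt{U}$; reconciling them into one clean bound that yields precisely $\beta(M,\gamma)$ — in effect formalizing the ``effective batch size $UB^t$'' heuristic — is the delicate part, and it is where Assumption~\ref{ass:4} (uniform sampling, which makes $M_n^t$ genuinely binomial and the votes exchangeable) carries the weight.
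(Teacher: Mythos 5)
Your proposal follows essentially the same route as the paper's proof: the coordinate-wise descent lemma, the splitting off of the $\mathbb{P}[M_n^t=0]$ terms to produce the $\alpha(M,\gamma)$ factors, Lemmas~\ref{lem:2}--\ref{lem:4} for the per-coordinate voting error weighted by the binomial law, and the telescoping sum with $\delta^t=1/\sqrt{T\lVert\mathbf{L}\rVert_1}$, $B^t=T$. The one step you flag as the obstacle is resolved in the paper by the short chain $\left[4(1-\tilde p)\tilde p\right]^{u/2}\le \left[4\tilde p\right]^{u/2}\le \tilde p/\sqrt{u}$, the last inequality holding whenever $\tilde p< \bigl(2^{u}\sqrt{u}\bigr)^{-2/(u-2)}$ (which tends to $1/4$ as $u\to\infty$ and is guaranteed here because $B^t=T$ makes $\tilde p_{m,n}^t$ small); this converts Lemma~\ref{lem:4}'s exponential bound into a bound simultaneously linear in $\sigma_n$ and decaying as $1/\sqrt{u}$, so the binomial sum collapses to $\beta(M,\gamma)$ exactly as you anticipated.
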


\begin{proof}
    See Section \ref{pf:thm:1}.
\end{proof}

\subsection{Special Cases}
\label{sec:int}
Our convergence rate analysis demonstrates that $\mathsf{S}^3$GD-MV guarantees to converge a stationary point as increasing the number of iterations $T$ with rate in order $\mathcal{O}\left(\frac{1}{\sqrt{T}}\right)$.  To better understand the result, we further provide several remarks on Theorem \ref{thm:1} by specializing important algorithmic parameters. 
\subsubsection{No sparsificiton case ($\gamma=1$)}
 When $\gamma=1$, i.e., no sparsificiton is applied for compression $N=K$, $\alpha(M,1)=1$ and $\beta(M, 1) =\frac{1}{\sqrt{M}}$. In this case, our algorithm is equivalent to signSGD, and the derived convergence rate with $\epsilon = 0$ coincides with the rate expression in \cite{bernstein2018asignsgd}. Consequently,  our result generalizes the convergence rate analysis of the prior work  \cite{bernstein2018asignsgd} in terms of the sparsification parameter $0<\gamma<1$.

\subsubsection{Single worker case ($M=1$)}

 When there is only a single worker ($M=1$), we observe different behavior in the constant terms of $\|{\bf L}\|_1$ and $\lVert \boldsymbol{\sigma} \rVert_1$. The $\lVert \mathbf{L} \rVert_1$ term increases with a rate of $\mathcal{O} \left( \frac{N}{K} \right)$, while the $\lVert \boldsymbol{\sigma} \rVert_1$ term changes at a rate of $\mathcal{O} \left( \sqrt{\frac{K}{N}} \right)$. This observation suggests that there may be an optimal sparsity level that results in the highest convergence rate.

\subsubsection{Large $M$ but small $\gamma$}
The most significant scenario occurs when we select a small value for $\gamma$ and a sufficiently large value for $M$. In this situation, the parameters become asymptotically $\alpha(M,\gamma) \approx M\gamma =\frac{MK}{N}$ and $\beta(M,\gamma) \approx M\gamma=\frac{MK}{N}$. These approximations result in the constant terms of $\|{\bf L}\|_1$ and $\lVert \boldsymbol{\sigma} \rVert_1$ having order $\mathcal{O} \left( \frac{N}{MK} \right)$ and $\mathcal{O} \left( \sqrt{\frac{K}{N}} \right)$, respectively. 

In this case, the sparsity parameter $\gamma=\frac{K}{N}$ plays an important role in altering the convergence rate. To speed up the convergence rate, we characterize the optimal sparsity parameter $\gamma$ that minimizes the upper bound of the convergence rate in Theorem \ref{thm:1}, which is stated in the following corollary. 

\begin{corollary}[Optimal sparsity of $\mathsf{S}^3$GD-MV]
\label{cor:1}
For small enough $\gamma \ll 1$, the optimal sparsity level that minimizes the upper bound of the convergence rate is 
\begin{align}
    \gamma^\star = \left( \frac{\epsilon \left( f^0 - f^\star \right)}{M} \frac{\sqrt{\lVert \mathbf{L} \rVert_1}}{\lVert \boldsymbol{\sigma} \rVert_1} \right)^{\frac{2}{3}}.
\end{align}
\end{corollary}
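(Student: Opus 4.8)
The plan is to treat the bracketed expression on the right-hand side of Theorem~\ref{thm:1} as a scalar function of $\gamma$ and to locate its minimizer through an elementary first-order condition. Since the prefactor $\frac{1}{\sqrt{T}}$ does not depend on $\gamma$, I would minimize only
\begin{align}
    g(\gamma) = \sqrt{\lVert \mathbf{L} \rVert_1} \left( \frac{f^0 - f^\star}{\alpha(M,\gamma)} + \frac{1}{2} \right) + \frac{\beta(M,\gamma)}{\alpha(M,\gamma)} \frac{2}{1 + \frac{\epsilon}{\sqrt{\gamma}}} \lVert \boldsymbol{\sigma} \rVert_1. \nonumber
\end{align}

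First I would invoke the small-$\gamma$ approximations already noted in the ``Large $M$ but small $\gamma$'' discussion: from $(1-\gamma)^M \approx 1 - M\gamma$ we get $\alpha(M,\gamma) \approx M\gamma$, and retaining only the dominant $u=1$ term of the binomial sum gives $\beta(M,\gamma) \approx M\gamma$. These collapse the ratio $\frac{\beta(M,\gamma)}{\alpha(M,\gamma)} \to 1$, stripping the noise term of its binomial dependence. Because $\gamma \ll 1$ forces $\frac{\epsilon}{\sqrt{\gamma}} \gg 1$ whenever $\epsilon > 0$, I would further approximate $\frac{2}{1 + \epsilon/\sqrt{\gamma}} \approx \frac{2\sqrt{\gamma}}{\epsilon}$. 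The objective then reduces to the two-term form
\begin{align}
    g(\gamma) \approx \sqrt{\lVert \mathbf{L} \rVert_1}\, \frac{f^0 - f^\star}{M\gamma} + \frac{2 \lVert \boldsymbol{\sigma} \rVert_1}{\epsilon} \sqrt{\gamma} + \text{const}, \nonumber
\end{align}
which exhibits exactly the competing $\gamma^{-1}$ (Lipschitz) and $\gamma^{1/2}$ (noise) scalings anticipated in the single-worker remark.

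With this clean form, I would differentiate and impose $g'(\gamma) = 0$, yielding
\begin{align}
    -\frac{\sqrt{\lVert \mathbf{L} \rVert_1}\,(f^0 - f^\star)}{M\gamma^2} + \frac{\lVert \boldsymbol{\sigma} \rVert_1}{\epsilon\sqrt{\gamma}} = 0. \nonumber
\end{align}
Rearranging gives $\gamma^{3/2} = \frac{\epsilon(f^0 - f^\star)}{M} \frac{\sqrt{\lVert \mathbf{L} \rVert_1}}{\lVert \boldsymbol{\sigma} \rVert_1}$, and raising both sides to the power $\frac{2}{3}$ produces precisely the claimed $\gamma^\star$. Since the first term is convex and decreasing while the second is convex and increasing on $(0,1]$, the stationary point is the unique global minimizer, which I would confirm via $g''(\gamma) > 0$.

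The main obstacle is controlling the two approximations rather than the calculus, which is routine. Specifically, the step $\beta(M,\gamma) \approx M\gamma$ discards the $u \geq 2$ binomial terms, and one should check that their aggregate is $o(\gamma)$ so that $\frac{\beta}{\alpha} \to 1$ holds to leading order; similarly the replacement $\frac{2}{1+\epsilon/\sqrt{\gamma}} \approx \frac{2\sqrt{\gamma}}{\epsilon}$ carries an $O(\gamma)$ correction that must be shown negligible against the retained $\gamma^{1/2}$ term. Provided $\epsilon$ is bounded away from zero and $M\gamma^\star \ll 1$ at the optimum, consistent with the regime $\gamma \ll 1$, these corrections are lower order and the first-order analysis delivers the stated optimal sparsity.
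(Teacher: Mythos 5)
Your proposal is correct and follows essentially the same route as the paper: it approximates $\alpha(M,\gamma)\approx\beta(M,\gamma)\approx M\gamma$ and $\frac{2}{1+\epsilon/\sqrt{\gamma}}\approx\frac{2\sqrt{\gamma}}{\epsilon}$ to reduce the bound to $h(\gamma)=\frac{1}{\sqrt{T}}\bigl(\frac{(f^0-f^\star)\sqrt{\lVert\mathbf{L}\rVert_1}}{M\gamma}+\frac{2\sqrt{\gamma}}{\epsilon}\lVert\boldsymbol{\sigma}\rVert_1\bigr)$, then solves the first-order condition and checks the second derivative, exactly as in the paper's proof. One minor slip: the $\sqrt{\gamma}$ term is concave, not convex, so global optimality should be argued from the sign of $h'(\gamma)$ (negative for $\gamma<\gamma^\star$, positive for $\gamma>\gamma^\star$) or from $h''(\gamma^\star)>0$ together with uniqueness of the stationary point, which is what the paper effectively does.
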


\begin{proof}
    See Section \ref{pf:cor:1}.
\end{proof}

This implies that to improve the convergence rate, we need to choose the smallest possible value of $K$ while keeping $M^{\frac{2}{3}}K \simeq N$. This aligns with our explanation in Example 2, where the algorithm must choose a large enough value of $K$ to update all components of the gradient in each iteration with high probability to attain the synergistic gain. Table \ref{table1} summarizes a comparison of the convergence rates with signSGD.

\begin{table}[t] 
\caption{Convergence rate of signSGD and ${\sf S}^3$GD-MV $(\gamma \ll 1)$}
\label{table1}
\begin{center}
\begin{small}
\begin{sc}
\begin{tabular}{P{0.15\columnwidth} P{0.15\columnwidth} P{0.15\columnwidth} P{0.15\columnwidth} P{0.15\columnwidth}}
\toprule
\multirow{2}{*}{Workers} & \multicolumn{2}{c}{signSGD} & \multicolumn{2}{c}{$\mathsf{S}^3$GD-MV} \\
\cline{2-5}
 & $\lVert \mathbf{L} \rVert_1$ & $\lVert \boldsymbol{\sigma} \rVert_1$ & $\lVert \mathbf{L} \rVert_1$ & $\lVert \boldsymbol{\sigma} \rVert_1$ \\
\midrule
\multirow{2}{*}{$M=1$} & \multirow{2}{*}{$\mathcal{O} (1)$} & \multirow{2}{*}{$\mathcal{O} (1)$} & \multirow{2}{*}{$\mathcal{O} \left( \frac{N}{K} \right)$} & \multirow{2}{*}{$\mathcal{O} \left( \sqrt{\frac{K}{N}} \right)$} \\ \\
\multirow{2}{*}{$M \geq 2$} & \multirow{2}{*}{$\mathcal{O} (1)$} & \multirow{2}{*}{$\mathcal{O} \left( \frac{1}{\sqrt{M}} \right)$} & \multirow{2}{*}{$\mathcal{O} \left( \frac{N}{MK} \right)$} & \multirow{2}{*}{$\mathcal{O} \left( \sqrt{\frac{K}{N}} \right)$} \\ \\
\bottomrule
& & & \multicolumn{2}{r}{$\left( \text{rate:} \times \frac{1}{\sqrt{T}} \right)$}

\end{tabular}
\end{sc}
\end{small}
\end{center}
\vskip -0.1in
\end{table}

\subsection{Effect of the random-$K$ sparsificaiton}

It is also instructive to compare the convergence rate when using the random-$K$ sparisifciation technique composed of the one-bit quantization instead of {\sf TopKSign}.  We define this operation by {\sf RandKSign}, which is
\begin{align}
   \mathsf{RandKSign} \left( \mathbf{g}_m^t \right) = \mathsf{sgn} \left( \mathsf{RandK} \left( \mathbf{g}_m^t \right) \right).
\end{align}
By replacing {\sf TopKSign} operator into {\sf RandKSign} in Algorithm \ref{alg:s3gd}, we can modify the ${\sf S}^3$GD-MV algorithm. The key distinction is that this modified algorithm does not require the error accumulation process in the selection of the $K$ components of the stochastic gradient. The entire steps are summerized Algorithm \ref{alg:s3gd_rand}.



The following theorem states the convergence rate of such modified algorithm.

%

\begin{algorithm}[tb]
  \caption{${\sf S}^3$GD-MV (Random-$K$ ver.)}
  \label{alg:s3gd_rand}
\begin{algorithmic}
   \STATE \textbf{Input:} initial model $\mathbf{x}^0$, learning rate $\delta^t$, gradient sparsity $K$, the number of workers $M$, total iteration $T$
   \FOR{$t=0:T-1$}
   \FOR{\textbf{worker} $m=1: M$}
        \STATE \textbf{compute} $\mathbf{g}_m^t$ (local gradient)
        \STATE \textbf{send} $\mathsf{RandKSign} \left( \mathbf{g}_m^t \right)$ to \textbf{server}
        \STATE \textbf{receive} $\mathsf{sgn} \! \left[ \sum_{m=1}^M \!\! \mathsf{RandKSign} \! \left( {\bf g}_m^t \right) \right]$ from \textbf{server}
        \STATE ${\bf x}^{t+1} \leftarrow {\bf x}^t - \delta^t \mathsf{sgn} \! \left[ \sum_{m=1}^M \!\! \mathsf{RandKSign} \left( {\bf g}_m^t \right) \right]$
   \ENDFOR
   \ENDFOR
\end{algorithmic}
\end{algorithm}

\begin{theorem} \label{thm:2}
    Given learning rate $\delta^t = \frac{1}{\sqrt{T \lVert \mathbf{L} \rVert_1}}$, and batch size $B^t = T$, $\mathsf{S}^3$GD-MV with $\mathsf{RandKSign}$ operator converges as
    \begin{align*}
        &\mathbb{E} \left[ \frac{1}{T} \sum_{t=0}^{T-1} \lVert \bar{\mathbf{g}}^t \rVert_1 \right] \\
        &\leq \frac{1}{\sqrt{T}} \left[ \sqrt{\lVert \mathbf{L} \rVert_1} \left( \frac{f^0 - f^\star}{\alpha (M, \gamma)} + \frac{1}{2} \right) + 2 \frac{\beta(M, \gamma)}{\alpha (M, \gamma)} \lVert \boldsymbol{\sigma} \rVert_1 \right]. \numberthis{}
    \end{align*}
 
\end{theorem}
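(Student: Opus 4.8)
The plan is to prove Theorem~\ref{thm:2} by replaying the argument behind Theorem~\ref{thm:1} almost verbatim and isolating the single place where the compression operator enters the bound. The descent analysis, the binomial law of the participation count (Lemma~\ref{lem:3}), and the majority-vote step (Lemma~\ref{lem:4}) depend on the operator only through two quantities: the per-coordinate selection probability $\gamma$ and the per-worker sign-flip probability $p_{m,n}^t$. For $\mathsf{RandKSign}$ each worker draws a uniformly random set of $K$ coordinates, so every coordinate is selected with probability exactly $\gamma=\frac{K}{N}$, independently across workers; hence Assumption~\ref{ass:4} holds \emph{by construction}, with no appeal to error feedback, and Lemma~\ref{lem:3} carries over unchanged to give $M_n^t\sim\mathsf{B}(M,\gamma)$. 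Because Lemma~\ref{lem:4} is phrased purely in terms of $p_{m,n}^t$ and $M_n^t$, it too applies verbatim, so the aggregation constants $\alpha(M,\gamma)$ and $\beta(M,\gamma)$ are identical to those in Theorem~\ref{thm:1}.

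The only genuine difference is the sign-flip probability, i.e.\ the $\mathsf{RandKSign}$ analogue of Lemma~\ref{lem:2}. I would redo the Markov-type estimate underlying Lemma~\ref{lem:2}, but now \emph{without} a magnitude threshold: since $\mathsf{RandK}$ selects coordinates independently of their magnitude, there is no threshold $\rho_{m,n}^t$ and hence no Lemma~\ref{lem:1} factor. Bounding the flip event (for $\bar{g}_n^t>0$) by $\{g_{m,n}^t<0\}\subseteq\{\,|g_{m,n}^t-\bar{g}_n^t|\ge|\bar{g}_n^t|\,\}$ and applying Markov's inequality together with the same second-moment estimate as in Lemma~\ref{lem:2} gives
\begin{align*}
p_{m,n}^t=\mathbb{P}\!\left[\mathsf{RandKSign}\!\left(g_{m,n}^t\right)\ne\mathsf{sign}\!\left(\bar{g}_n^t\right)\right]\le\frac{1}{\sqrt{B^t}}\,\frac{\sigma_n}{\left|\bar{g}_n^t\right|},
\end{align*}
which is exactly the $\epsilon=0$ specialization of \eqref{eq:lem2}; the threshold factor $\frac{1}{1+\epsilon/\sqrt{\gamma}}$ collapses to $1$.

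With these pieces in place I would run the coordinate-wise smoothness step of the proof of Theorem~\ref{thm:1} (Section~\ref{pf:thm:1}) unchanged. Writing $\mathbf{v}^t=\mathsf{sgn}\!\left[\sum_{m=1}^M\mathsf{RandKSign}\!\left(\mathbf{g}_m^t\right)\right]$ for the aggregated update, Assumption~\ref{ass:2} gives $f(\mathbf{x}^{t+1})-f(\mathbf{x}^t)\le-\delta^t\langle\bar{\mathbf{g}}^t,\mathbf{v}^t\rangle+\frac{(\delta^t)^2}{2}\sum_{n}L_n(v_n^t)^2$; taking expectations and using $\mathbb{E}[(v_n^t)^2]=\mathbb{P}[v_n^t\ne0]\le\alpha(M,\gamma)$ reproduces the same $\alpha$-weighted $\lVert\mathbf{L}\rVert_1$ term as in Theorem~\ref{thm:1}. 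The inner-product term splits, exactly as there, into the correct-vote mass $\alpha(M,\gamma)\lVert\bar{\mathbf{g}}^t\rVert_1$ minus an error contribution obtained by averaging the conditional majority-vote error of Lemma~\ref{lem:4} against the binomial weights of Lemma~\ref{lem:3}; this is the step that manufactures $\beta(M,\gamma)$ and in which $p_{m,n}^t$ enters linearly at leading order. Telescoping over $t$ with $\delta^t=1/\sqrt{T\lVert\mathbf{L}\rVert_1}$ and $B^t=T$ then yields the claim, the only change from \eqref{eq:thm1} being that the coefficient $\frac{2}{1+\epsilon/\sqrt{\gamma}}$ of $\lVert\boldsymbol{\sigma}\rVert_1$ becomes $2$, precisely because the threshold factor is absent.

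I expect the principal obstacle to be bookkeeping rather than a new idea: one must verify that discarding the error-feedback memory and the top-$K$ threshold alters \emph{only} the sign-flip constant, leaving $\alpha(M,\gamma)$, $\beta(M,\gamma)$, and the $\alpha$-weighted smoothness term intact; the exactness of Assumption~\ref{ass:4} for $\mathsf{RandK}$ makes this cleaner than in Theorem~\ref{thm:1}. The genuinely delicate point is justifying that the error contribution is first order in $p_{m,n}^t$, so that the multiplicative threshold factor of Lemma~\ref{lem:2} propagates \emph{linearly} to the final constant; concretely, one must confirm that the higher-order-in-$p_{m,n}^t$ terms created by the majority vote are dominated once $B^t=T$ forces $p_{m,n}^t=\mathcal{O}(1/\sqrt{T})$, so that they are absorbed into the overall $\mathcal{O}(1/\sqrt{T})$ rate and only the $\epsilon$-free leading term survives.
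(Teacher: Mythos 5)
Your proposal is correct and follows essentially the same route as the paper's proof: both arguments keep Lemmas~\ref{lem:3} and~\ref{lem:4} and the $\alpha(M,\gamma)$-weighted descent step intact, set the top-$K$ threshold $\rho_{m,n}^t(\gamma)$ to zero so the sign-flip bound loses the $\frac{1}{1+\epsilon/\sqrt{\gamma}}$ factor, and reuse the linearization $[4\tilde p_{m,n}^t]^{u/2}\le \tilde p_{m,n}^t/\sqrt{u}$ from \eqref{eqn:pfD_1} before telescoping. The ``delicate point'' you flag about higher-order terms in $p_{m,n}^t$ is exactly what that inequality handles, so no additional work is needed.
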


\begin{proof}
    See Section \ref{pf:thm:2}.
\end{proof}

The key difference from the convergence rate in Theorem \ref{thm:1}  is the lack of the term $ \frac{2}{ 1\!+\!\frac{\epsilon}{\sqrt{\gamma}}} $ in $\lVert \boldsymbol{\sigma} \rVert_1$. Therefore, it deteriorates the learning performance because $\lVert \boldsymbol{\sigma} \rVert_1$ does not scale with $K$. This result indicates that it is very important to select the reliable $K$ components in the gradient sparsification in order to attain the synergistic gain of the sparsification and sign quantization.  We will validate this learning performance loss using simulation results in Section \ref{sec:simulation}.

\section{Simulation Results}
\label{sec:simulation}

In this section, we provide simulation results to validate the performance of $\mathsf{S}^3$GD-MV on broadly used benchmark datasets. We examine the performance in terms of test accuracy and communication costs and compare it with the existing communication-efficient distributed learning algorithms. In the sequel, we present the experimental settings and the results.

\begin{table}[t] 
\caption{Distributed learning algorithms}
\centering
\label{tab:alg}
\begin{small}
\begin{tabular}{P{0.3\columnwidth} P{0.6\columnwidth} }
\toprule
Algorithms & Total communication costs \\
\midrule
Vanilla SGD \cite{zinkevich2010parallelized} & $ [32MN + 32MN]\times T$ \\
Top-$K$ SGD with memory \cite{stich2018sparsified} & \multirow{2}*{$  \left(M\left[32K + K\log_2\left( \frac{N}{K} \right)\right] + 32MN\right)\times T$} \\
signSGD-MV \cite{bernstein2018asignsgd} & $(MN + MN)\times T$ \\
$\mathsf{S}^3$GD-MV & $ \left(M\left[K + K\log_2\left( \frac{N}{K} \right)\right] + MN\right)\times T$ \\
\bottomrule
\end{tabular}
\end{small} 
\vspace{-0.1cm}
\end{table}

\subsection{Simulation Environment}
\label{sec:settings}

%


{\bf Datasets and distributions:} We use two basic benchmark datasets, MNIST and CIFAR10  \cite{lecun1998gradient, krizhevsky2009learning}, which are broadly used for classification experiments. MNIST and CIFAR10 datasets contain 60,000 and 50,000 training images for each, and both datasets have 10,000 test image samples with ten classes.While data augmentation methods such as random cropping and horizontal random flipping are applied to CIFAR10, no data augmentation method is used for MNIST in the experiments. 

We consider two different data distributions. We randomly assign the data samples to workers using independent and identically distributed (IID) data distribution. In contrast, for the non-IID dataset, we allocate a single class of data to a particular worker; thereby, the data distributions across the workers are heterogeneous. 

 {\bf Models:} To validate the performance of the proposed algorithm, we consider two popular neural network models, convolutional neural network (CNN) \cite{lecun1998gradient} and ResNet-56  \cite{he2016deep}. CNN is a shallow network composed of five convolutional layers and three fully connected layers. The model size of CNN is set to be $N=5 \times 10^5$. Meanwhile, ResNet-56 with model size $N=8.5 \times 10^5$ is a typical network model for evaluating the performance on large datasets.  
    
 {\bf Benchmark:} We consider the four benchmark distributed learning algorithms:

\begin{figure}[t]
    \centerline{
    \subfloat[Convolutional layer in CNN]{%
        \includegraphics[width=0.5\columnwidth]{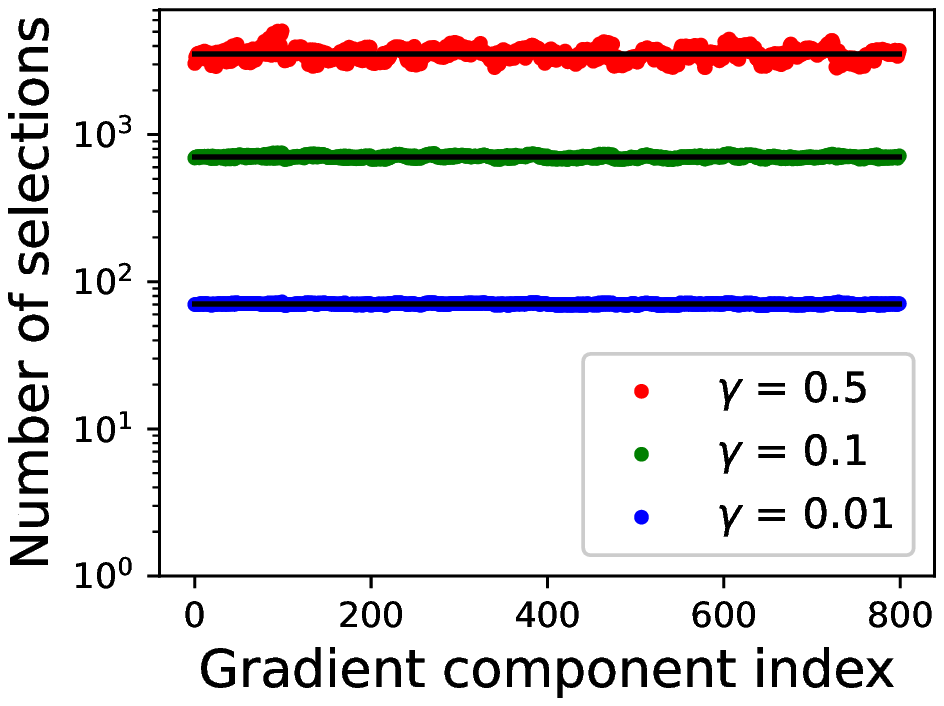}}
    \hfill
    \subfloat[Weight layer in CNN]{%
        \includegraphics[width=0.5\columnwidth]{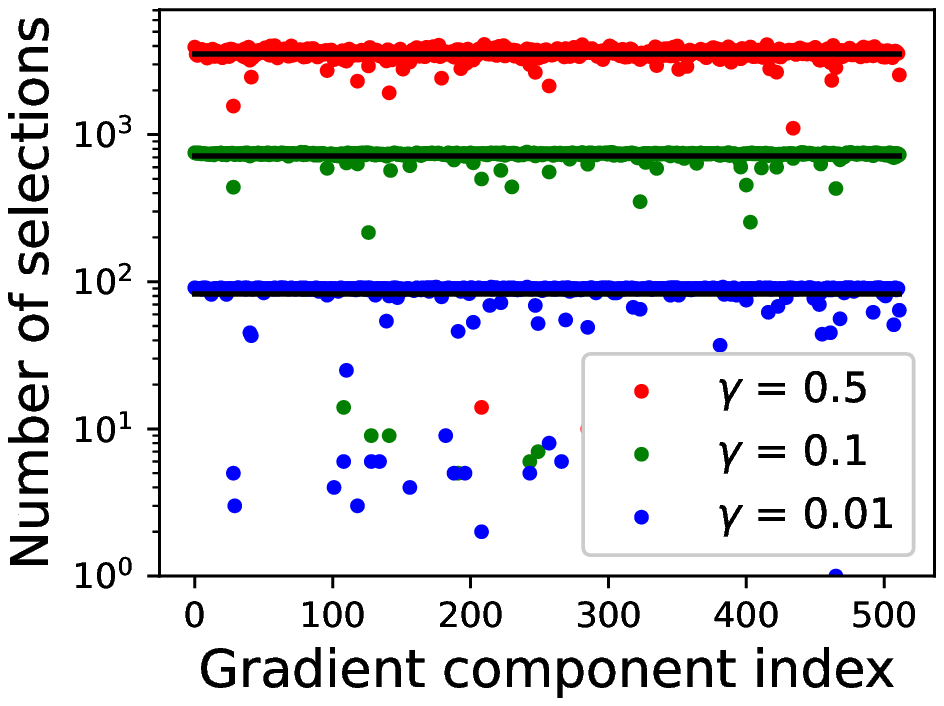}}
    }\vfill
    \centerline{
    \subfloat[Convolutional layer in ResNet]{%
        \includegraphics[width=0.5\columnwidth]{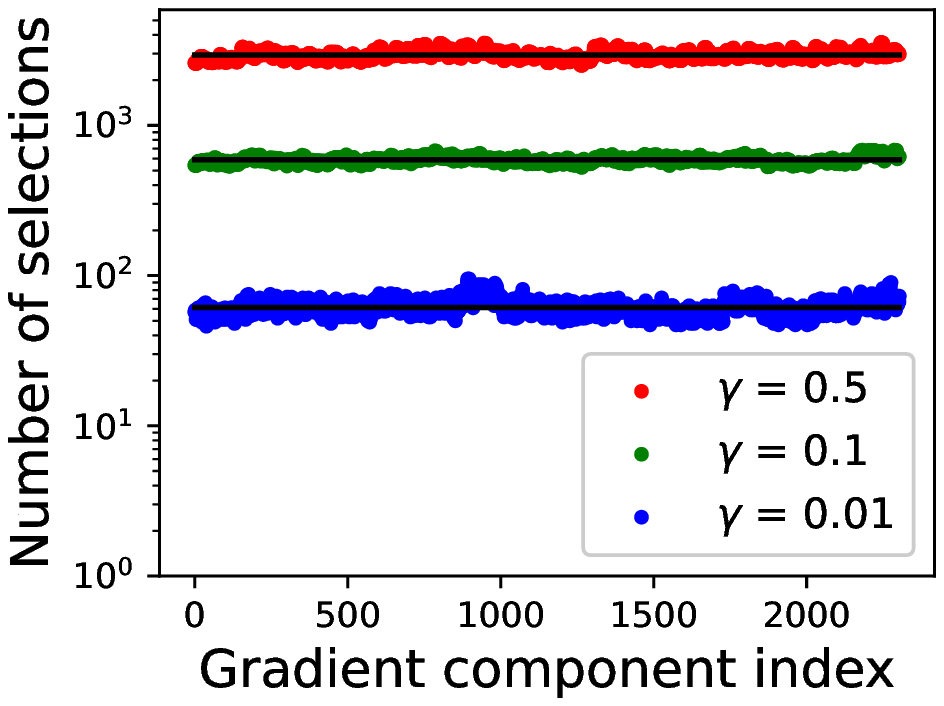}}
    \hfill
    \subfloat[Weight layer in ResNet]{
        \includegraphics[width=0.5\columnwidth]{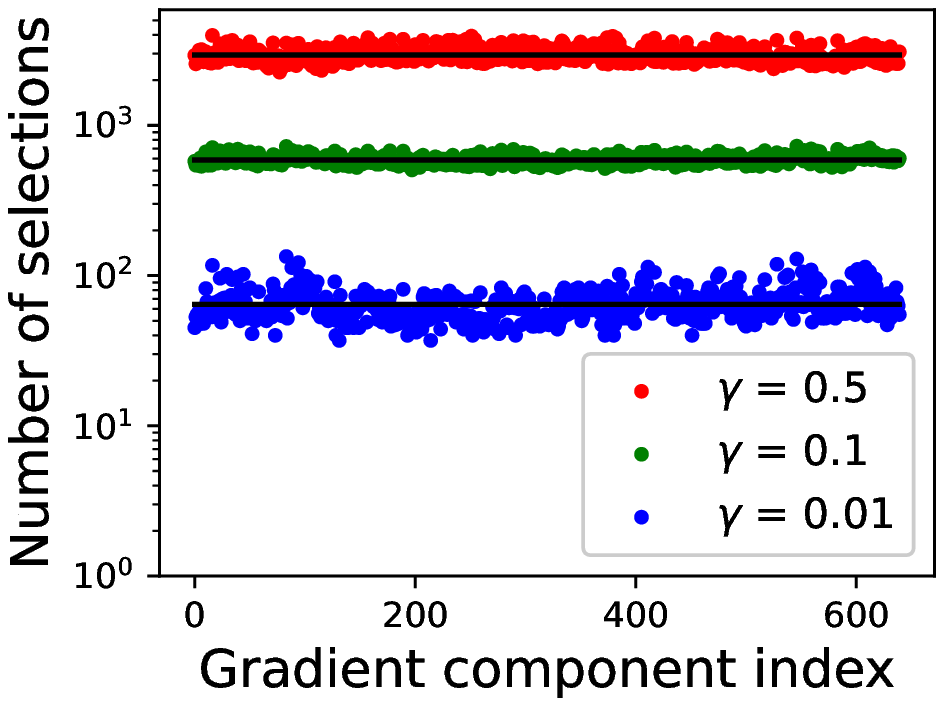}}
    }
    \caption{
    The number of selections of each gradient component on a certain layer of CNN (top) used for MNIST and ResNet (bottom) used for CIFAR10.
    }\label{fig:assumption4}
\end{figure}

  \begin{itemize}
 	\item Vanilla SGD  \cite{zinkevich2010parallelized}: this is a baseline distributed learning algorithm. Each worker and the server exchange the stochastic gradient with 32-bit resolution per iteration. The communication cost per round of the vanilla SGD, therefore, is $ 2\times (32NM)$ bits.  

 	\item Top-$K$ SGD with memory  \cite{stich2018sparsified}: the communication cost of this algorithm is at most $M\left[32K + K\log_2\left( \frac{N}{K} \right)\right] + 32NM$ bits. This algorithm attains the compression gain compared to the SGD for the gradient update from workers to the server.   
 	
 	\item signSGD with MV  \cite{bernstein2018asignsgd}: this algorithm requires the communication cost of $2MN$ bits, which is 32x reduction compared to that of the vanilla SGD.  
 	
 	\item $\mathsf{S}^3$GD-MV: our algorithm needs at most $M\left[K + K\log_2\left( \frac{N}{K} \right)\right] + MN$ per training round. It obtains the gradient compression gain from both the sparsification and one-bit quantization.

 	\end{itemize}
  To optimize the learning performance, we carefully choose the learning rate $\delta^t\in \{10^{-1},10^{-3}\}$ for the first two and the last two algorithms, respectively. The effects of the error weight and momentum will be investigated.


%

\subsection{Validation of uniform sampling assumption}


To validate our assumption of uniform sampling of the non-zero supports in the top-$K$ sparsification (Assumption \ref{ass:4}), we present numerical results. As shown in Fig. \ref{fig:assumption4}, we track the number of counts selected by the top-$K$ sparsification over 30 epochs for each gradient component index, which serves as the empirical probability mass function. Our numerical results indicate that the top-$K$ sparsification with a carefully selected error weight parameter $\eta$ allows for holding a uniform sampling property of the non-zero supports. This result supports  Assumption \ref{ass:4}.




\begin{figure}[!t]
    \centerline{
    \subfloat[$M=10$ and MNIST dataset]{%
        \includegraphics[width=0.5\columnwidth]{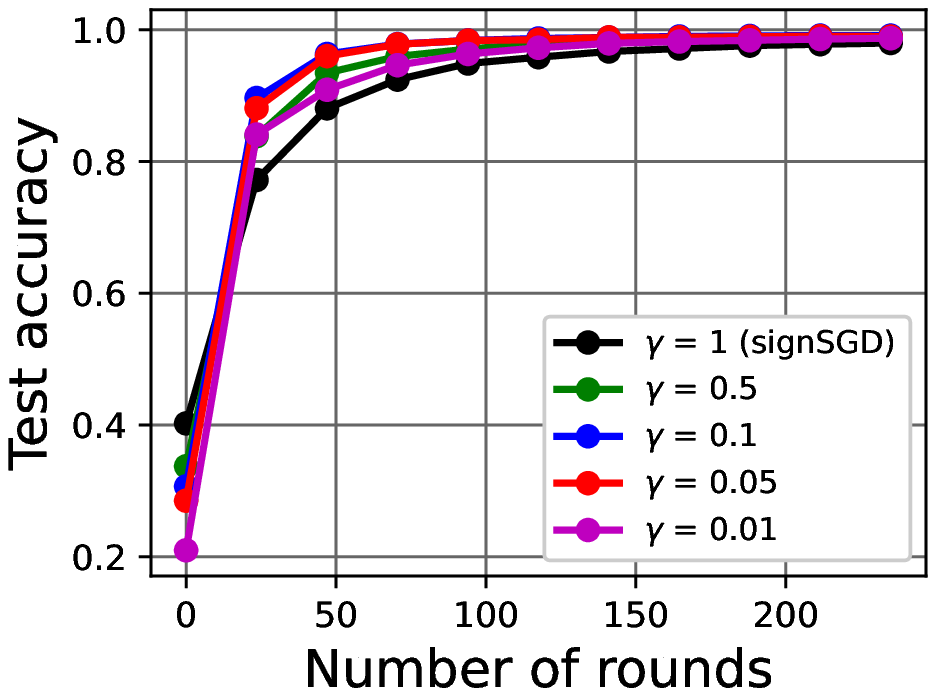}}
    \hfill
    \subfloat[$M=100$ and MNIST dataset]{%
        \includegraphics[width=0.5\columnwidth]{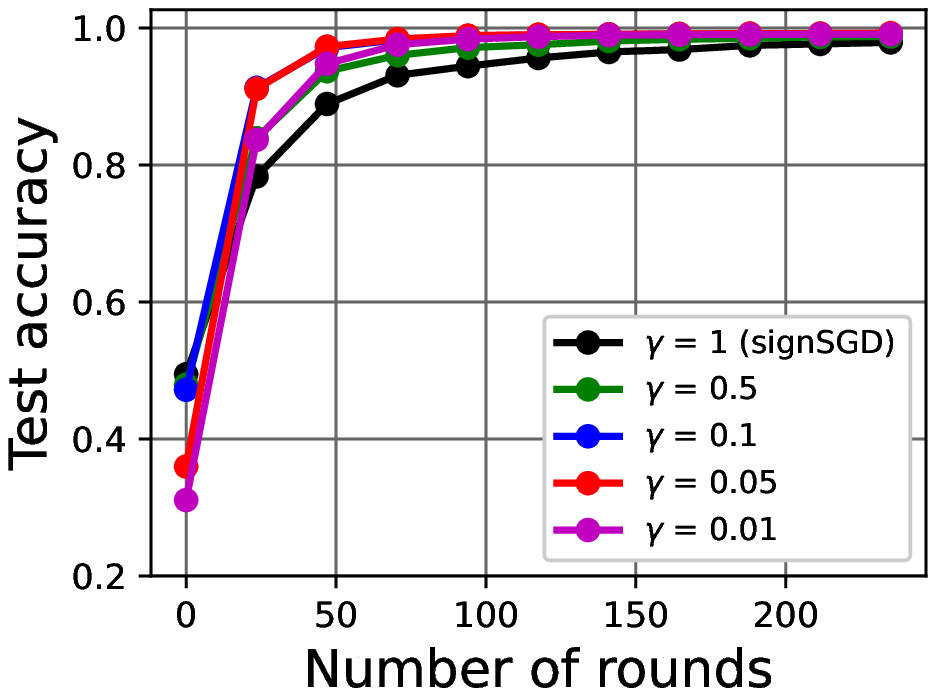}}
    
    }
    \vfill
    \centerline{
    \subfloat[$M=5$ and CIFAR10 dataset]{%
        \includegraphics[width=0.5\columnwidth]{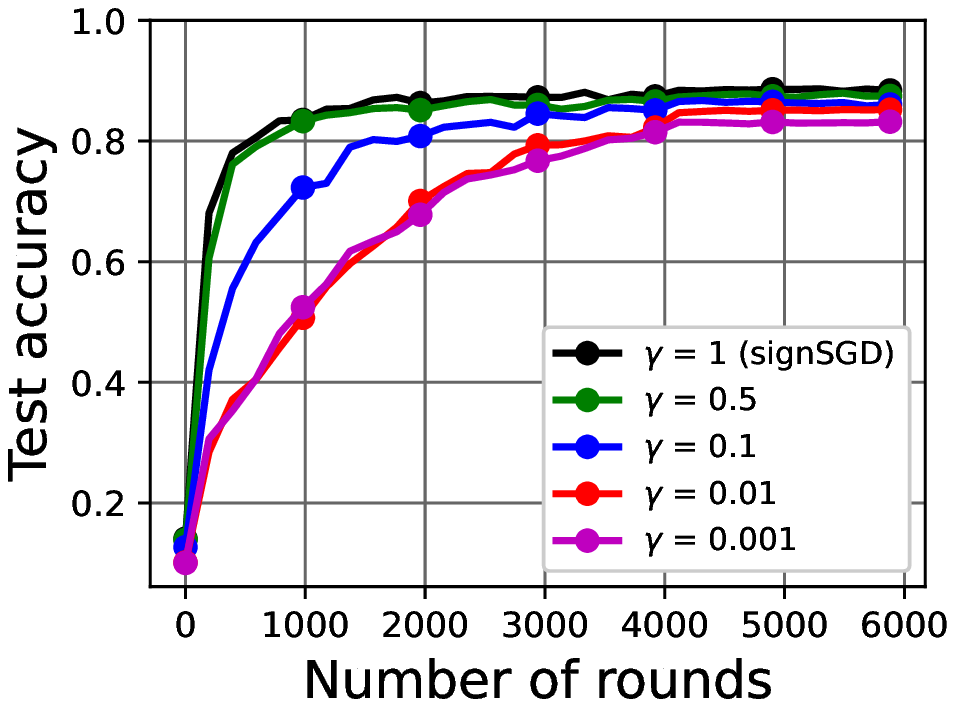}}
    \hfill
    \subfloat[$M=50$ and CIFAR10 dataset]{
        \includegraphics[width=0.5\columnwidth]{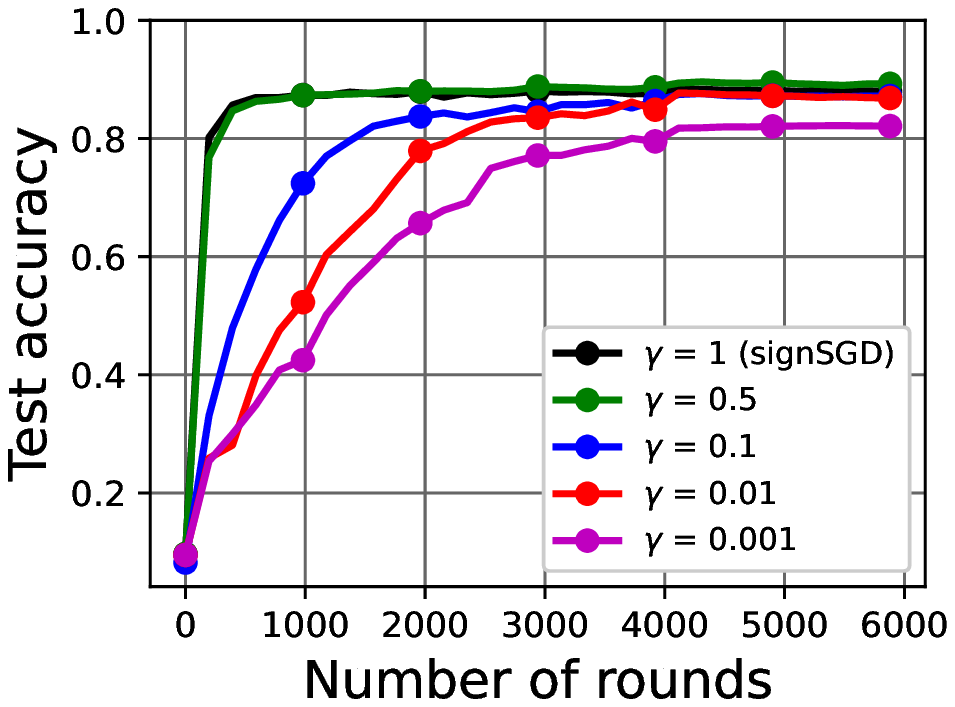}}
    }
    \caption{
    The test accuracy versa training rounds with the different number of workers $M$ for IID data distribution. 
    }\label{fig:s3gd_IID} 
    \vspace{-0.3cm}
\end{figure}

\subsection{Effects of $\gamma$ and $M$ for IID dataset }
 

The impact of sparsity and the number of workers on test accuracy is depicted in Fig. \ref{fig:s3gd_IID}. The figures on the top show the test accuracy of $\mathsf{S}^3$GD-MV with $M\in\{10,100\}$ for optimizing the CNN using the IID MNIST dataset. SignSGD is used as a benchmark when $\gamma=1$. For the MNIST dataset, our algorithm can outperform signSGD for all $\gamma$s and $M$s while saving the costs considerably. Interestingly, we can speed up the convergence rate by carefully choosing $\gamma=\frac{K}{N}$ such that $M^{\frac{2}{3}}K \simeq N$ as derived in Corollary \ref{cor:1}. For instance, as the number of workers increases, the optimal sparsity of $\mathsf{S}^3$GD-MV should be lower. The sparsity level of $\gamma=0.1$ and 0.05 for $M=10$ and 100 are shown to be optimal choices. This aligns with the finding in Corollary \ref{cor:1}.

The bottom figures illustrate the test accuracy of the proposed algorithm for the numbers of workers $M\in\{5,50\}$ when optimizing the ResNet-56 model using the CIFAR10 dataset. Since the CIFAR10 dataset is much complicated than the MNIST dataset, we need to use a higher $\gamma$ for ResNet-56 than those for the CNN for given $M$ when running the $\mathsf{S}^3$GD-MV algorithm to speed up the learning performance. As expected from Corollary \ref{cor:1}, it is possible to achieve the higher test accuracy when $\gamma=0.5$ and $M=50$, even better the test accuracy than signSGD. This result reveals that our algorithm improves the test accuracy and reduces the communication cost considerably. This is in line with our analysis in Theorem \ref{thm:1}.


\subsection{Effects of $\gamma$ and $M$ for non-IID dataset }

Fig. \ref{fig:s3gd_nonIID} demonstrates the effect of sparsity and worker number on the test accuracy for non-IID datasets. The test accuracy of ${\sf S}^3$GD-MV algorithm is plotted as the number of rounds is increased, for different values of sparsity parameter $\gamma$ and number of workers. The benchmark is signSGD, which is equivalent to our algorithm when $\gamma=1$.

As evident from Fig. \ref{fig:s3gd_nonIID}-(a) and \ref{fig:s3gd_nonIID}-(b), the traditional signSGD fails to train the CNN model for non-IID MNIST dataset for both $M=10$ and $M=100$. Our algorithm also exhibits similar results for $\gamma=0.5$ and both values of $M$. However, by selecting $\gamma=0.1$ for $M=10$ and $\gamma=0.05$ for $M=100$, based on the result in Corollary \ref{cor:1}, we achieve the highest learning rates while minimizing communication costs. This result highlights the efficiency and robustness of our algorithm in handling non-IID data distribution by carefully choosing $\gamma$ based on the model size and worker number.

\begin{figure}[!t]
    \centerline{
    \subfloat[$M=10$ and MNIST dataset]{%
        \includegraphics[width=0.5\columnwidth]{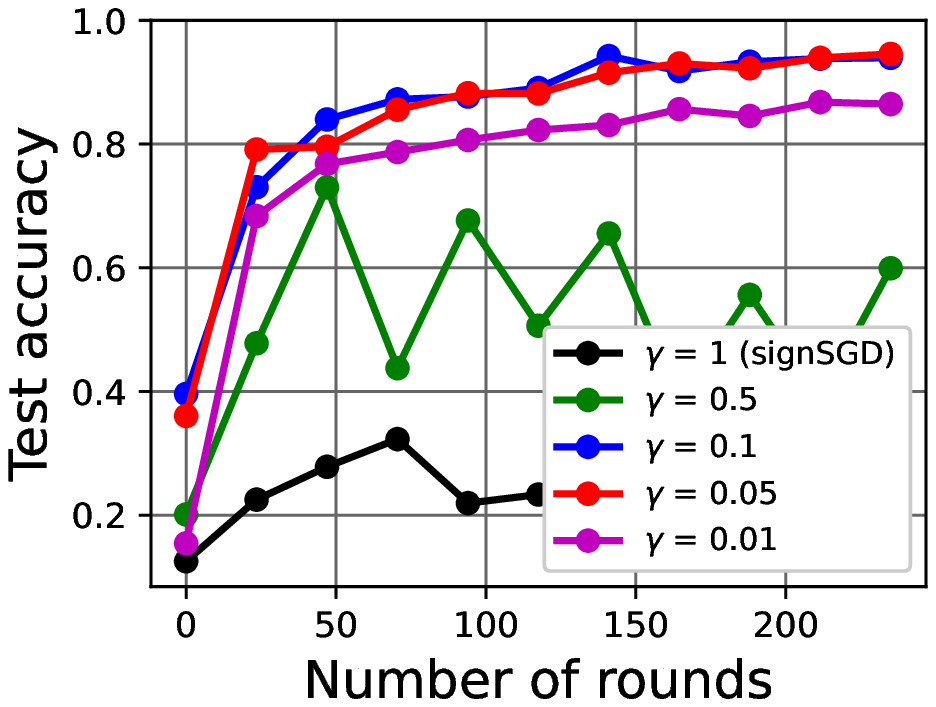}}
    \hfill
    \subfloat[$M=100$ and MNIST dataset]{%
        \includegraphics[width=0.5\columnwidth]{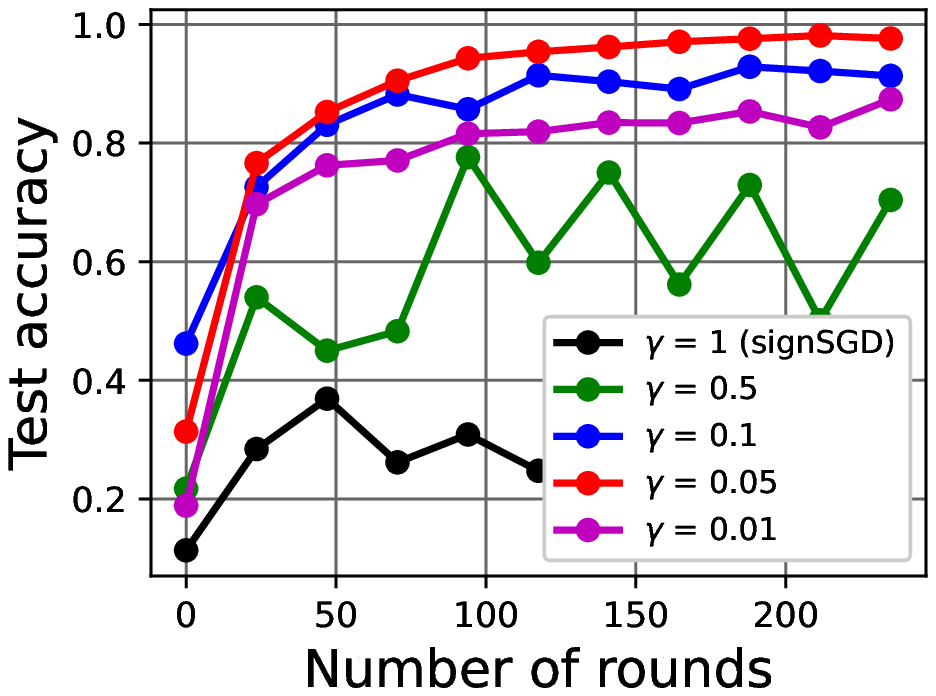}}
    }
   \caption{
    The test accuracy versa training rounds with the different number of workers $M$ for the non-IID data distribution. 
    }\label{fig:s3gd_nonIID} 
    \vspace{-0.3cm}
\end{figure}

\begin{figure}[!t]
    \centerline{
    \subfloat[$M=10$ and MNIST dataset]{%
        \includegraphics[width=0.5\columnwidth]{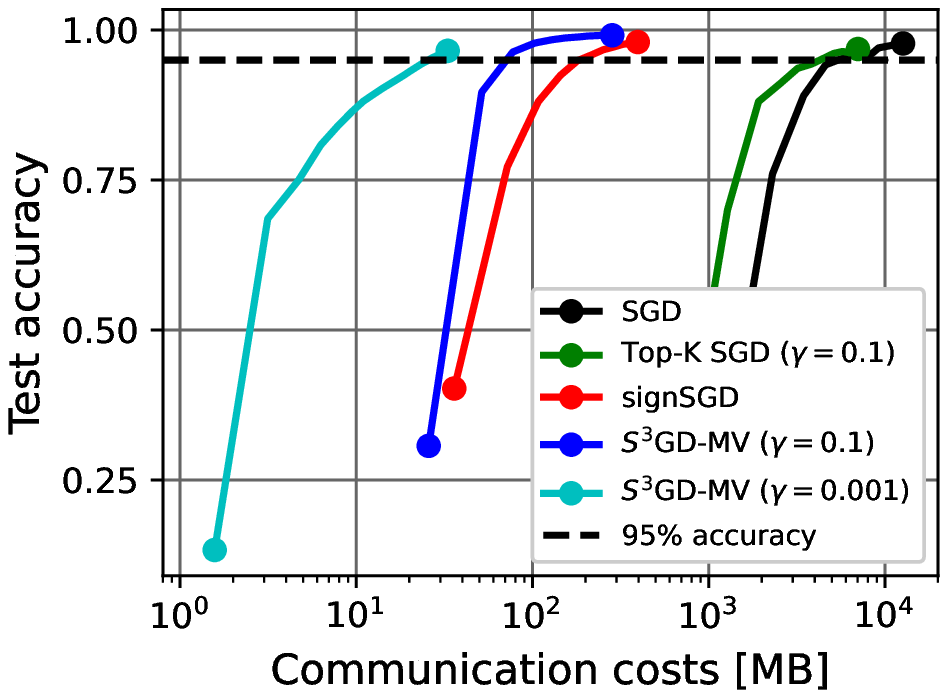}}
    \hfill
    \subfloat[$M=100$ and MNIST dataset]{%
        \includegraphics[width=0.5\columnwidth]{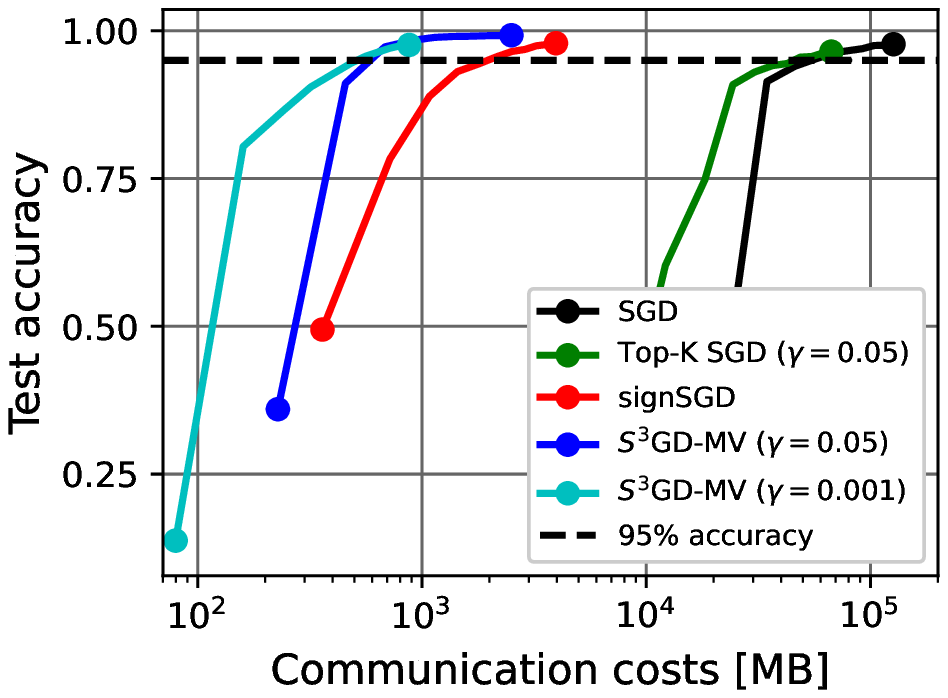}}
    }
    \caption{
    The test accuracy versa the total communication cost. We compare four distributed learning algorithms for the IID data setting.
    }\label{fig:algs} 
    \vspace{-0.3cm}
\end{figure}

\subsection{Communication cost reduction}

 
 It is important to compare the algorithms for the test accuracy versa the total communication cost for distributed learning. The total communication cost is computed by multiplying the number of communication rounds by the cost per iteration. The test accuracy comparison of different algorithms based on the total communication costs is illustrated in Fig. \ref{fig:algs}. We compare our algorithm using the parameters of $(M, \gamma)=(10, 0.001)$, $(M, \gamma)=(10, 0.001)$, $(M, \gamma)=(100, 0.001)$ and $(M, \gamma)=(100, 0.05)$ with benchmark algorithms, including the vanilla SGD, Top-$K$ SGD with memory, and signSGD.  The total communication cost for each algorithm is summarized in Table \ref{tab:alg}. 
 
The results shown in Fig. \ref{fig:algs}-(a) demonstrate the superior performance of $\mathsf{S}^3$GD-MV in terms of both test accuracy and communication cost reduction. When $(M, \gamma)=(10, 0.1)$, the proposed algorithm achieves the highest test accuracy while reducing the total communication cost by roughly 100x, 80x, and 3x compared to the vanilla SGD, top-$K$ SGD, and signSGD, respectively. We can reduce the communication cost further at the expense of the test accuracy slignly by reducing $\gamma$. For instance, when $(M, \gamma)=(10, 0.001)$, to reach a 95$\%$ test accuracy, our algorithm saves the communication cost by about 300x, 200x, and 10x compared to the vanilla SGD, top-$K$ SGD, and signSGD, respectively.  

This remarkable improvement is still attained even when $M=100$ as shown in Fig. \ref{fig:algs}-(b). For example, when $(M, \gamma)=(100, 0.05)$, to attain a 95$\%$ test accuracy, our algorithm requires about 200x, 100x, and 5x less communication costs than the vanilla SGD, top-$K$ SGD, and signSGD algorithms. These results clearly demonstrate the effectiveness of the sparsification and sign quantization techniques employed in $\mathsf{S}^3$GD-MV.



\begin{figure}[!t]
    \centerline{
    \subfloat[$\mathsf{TopKSign}$ and MNIST dataset]{%
        \includegraphics[width=0.5\columnwidth]{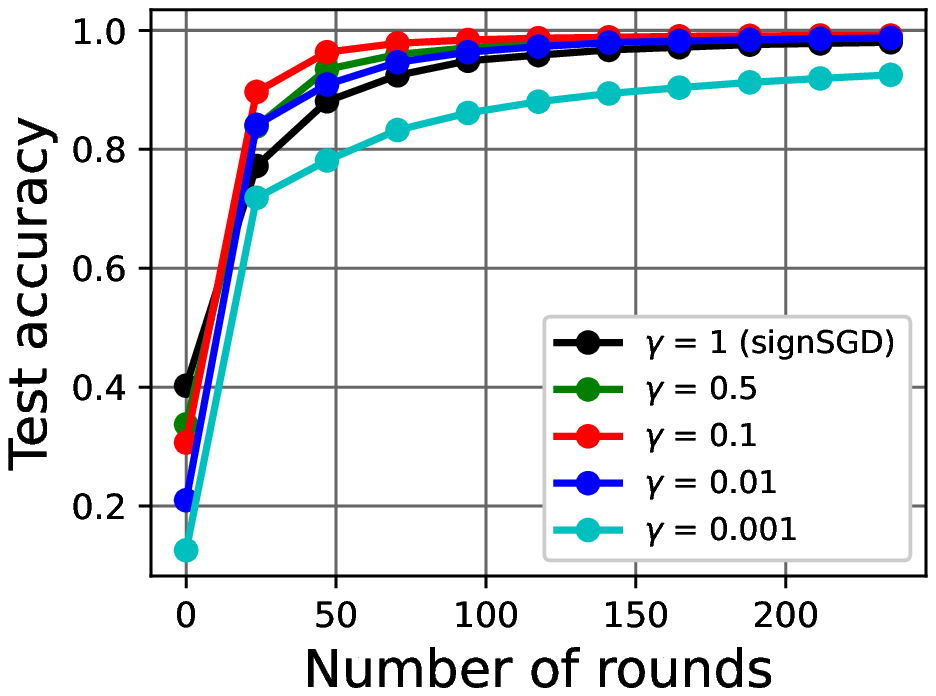}}
    \hfill
    \subfloat[$\mathsf{RandKSign}$ and MNIST dataset]{%
        \includegraphics[width=0.5\columnwidth]{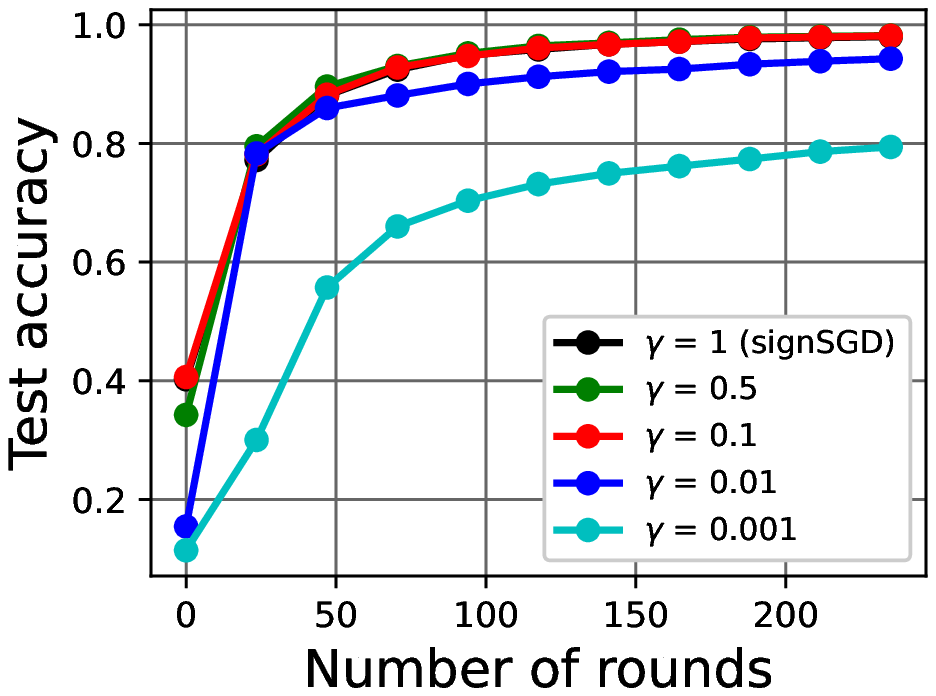}}
    }\vfill
    \centerline{
    \subfloat[$\mathsf{TopKSign}$ and CIFAR10 dataset]{%
        \includegraphics[width=0.5\columnwidth]{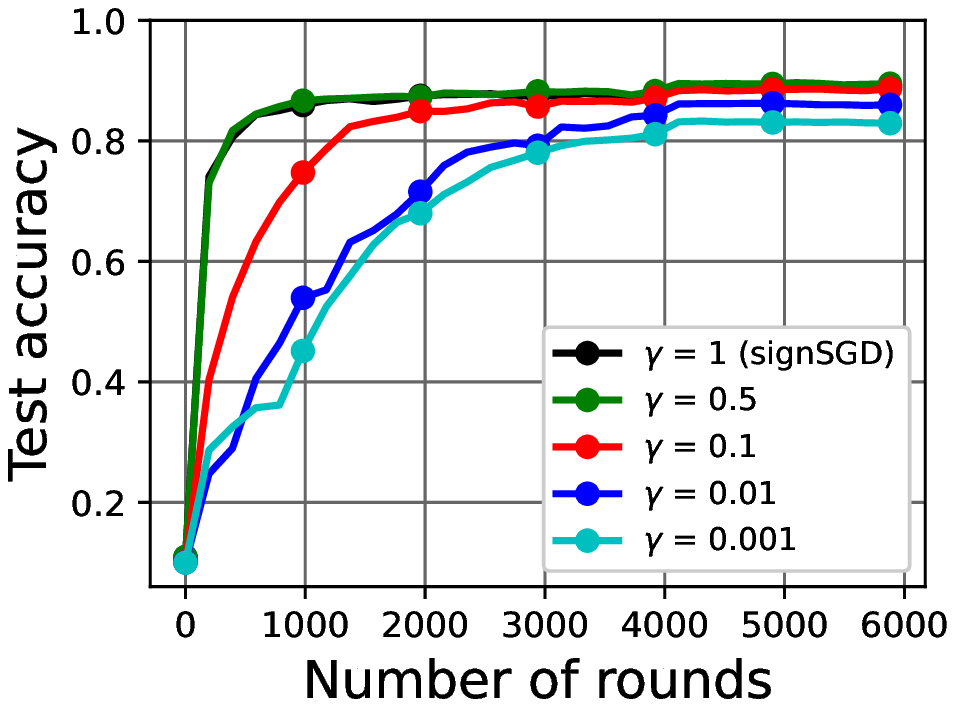}}
    \hfill
    \subfloat[$\mathsf{RandKSign}$ and CIFAR10 dataset]{
        \includegraphics[width=0.5\columnwidth]{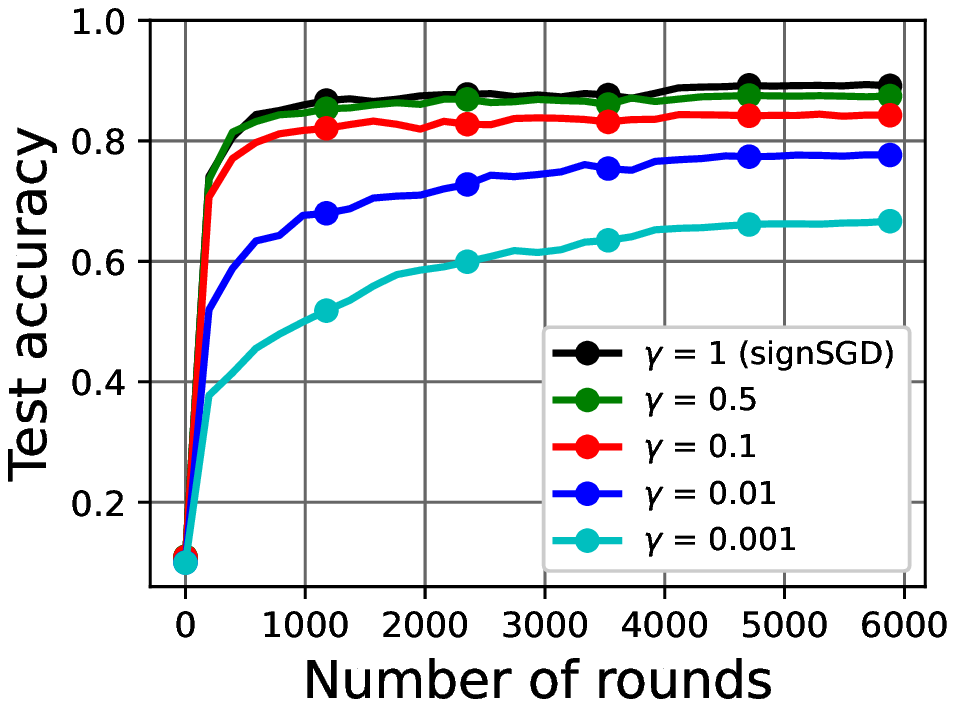}}
    }
    \caption{
    The test accuracy versa training rounds with the different gradient compression schemes for $M=10$ and IID data distribution.
    }\label{fig:top_rand} 
    \vspace{-0.3cm}
\end{figure}

 \subsection{Effects of random selection}
We compare the test accuracy of two algorithms, Algorithm 1 using ${\sf TopKSign}$ and Algorithm 2 using ${\sf RandKSign}$, using both the IID MNIST and CIFAR10 datasets. As shown in the figures, both algorithms have the same communication cost, but their test accuracy performance varies based on the value of $\gamma$. For Algorithm 1 with ${\sf TopKSign}$, there is an optimal value of $\gamma$ that provides the best learning performances, which depends on both the number of workers $M$ and the model size $N$. However, when using Algorithm 2 with ${\sf RandKSign}$, the test accuracy continues to improve as $\gamma$ increases for both datasets. This result indicates that better test accuracy can be achieved by increasing communication cost, which is in contrast to the case with ${\sf TopKSign}$, where a specific value of $\gamma$ provides the highest learning performance while diminishing the communication cost.

 \subsection{Effects of optimization hyper-parameters}
When optimizing the deep learning model, it is also important to investigate the effects of hyper-parameters, including learning rates and momentum term, which may accelerate the convergence rate and improve the accuracy. To accomplish this, we modify the model update rule of the proposed algorithm in \eqref{eqn:s3gd} as \begin{align}
     \mathbf{x}^{t+1} = \mathbf{x}^t - \delta^t \mathbf{v}^t,
 \end{align}
 where $\mathbf{v}^t$ is the momentum added at iteration $t$, which is defined as 
 \begin{align}
     \mathbf{v}^t = \mu \mathbf{v}^{t-1} + \mathsf{sgn} \left[ \sum_{m=1}^M \mathsf{TopKSign} \left( \mathbf{g}_m^t \right) \right].
 \end{align}
The test accuracy results for different hyper-parameters $\eta$ and $\mu$ are plotted in Fig. \ref{fig:hyperparam}. The addition of momentum leads to a faster convergence rate, however, the final accuracy falls short when compared to results without momentum. It is evident from the results that an optimal selection of hyper-parameters can improve the learning performance. Despite the influence of hyper-parameters on the learning performance, our algorithm still outperforms all benchmark algorithms in terms of communication cost reduction. For the purpose of generating the previous figures, the hyper-parameters $\eta = 1$ and $\mu = 0$ were utilized. 
 

\begin{figure}[t]
    \centerline{
    \subfloat[$M=1$ and $\gamma = 0.5$]{%
        \includegraphics[width=0.5\columnwidth]{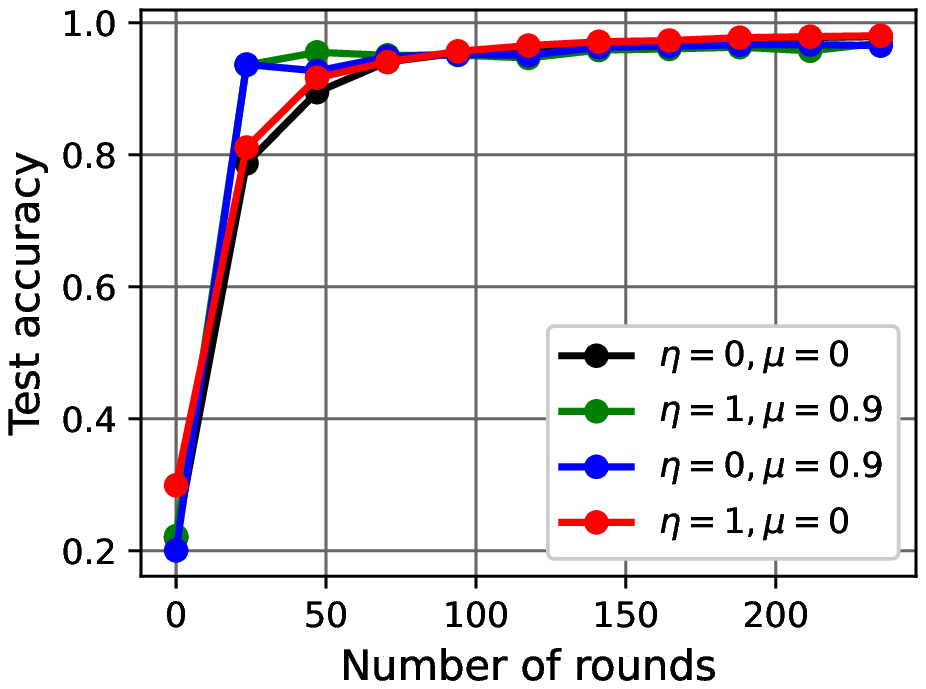}}
    \hfill
    \subfloat[$M=1$ and $\gamma = 0.01$]{%
        \includegraphics[width=0.5\columnwidth]{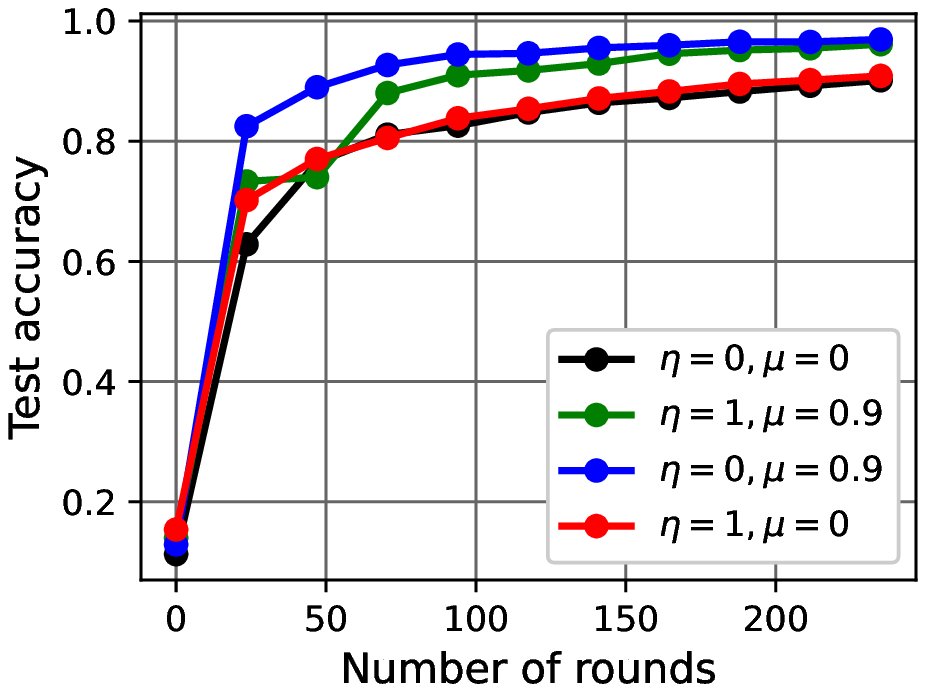}}
    }\vfill
    \centerline{
    \subfloat[$M=10$ and $\gamma = 0.5$]{%
        \includegraphics[width=0.5\columnwidth]{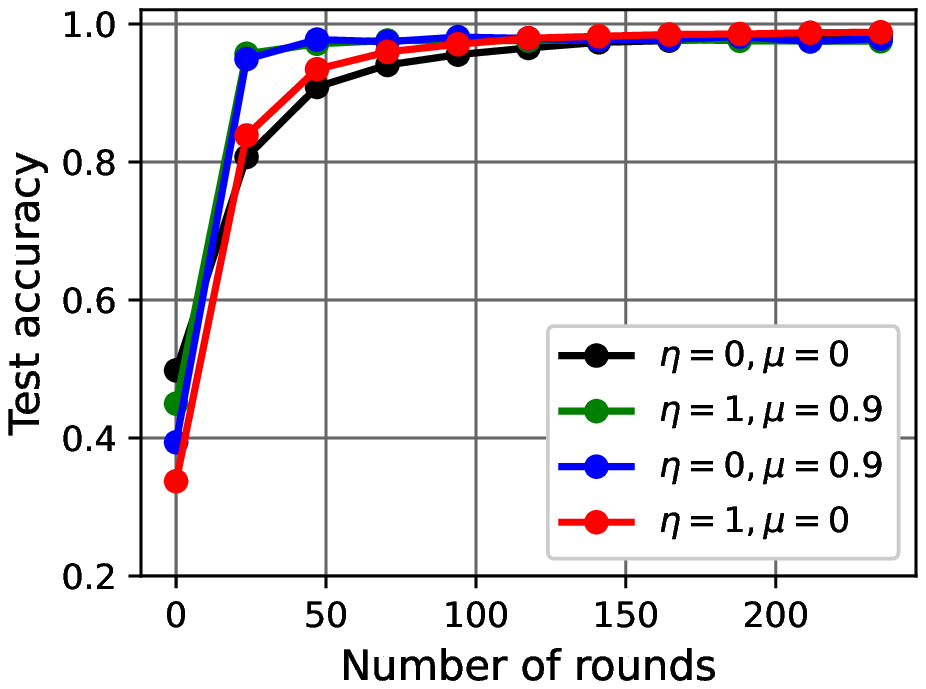}}
    \hfill
    \subfloat[$M=10$ and $\gamma = 0.01$]{
        \includegraphics[width=0.5\columnwidth]{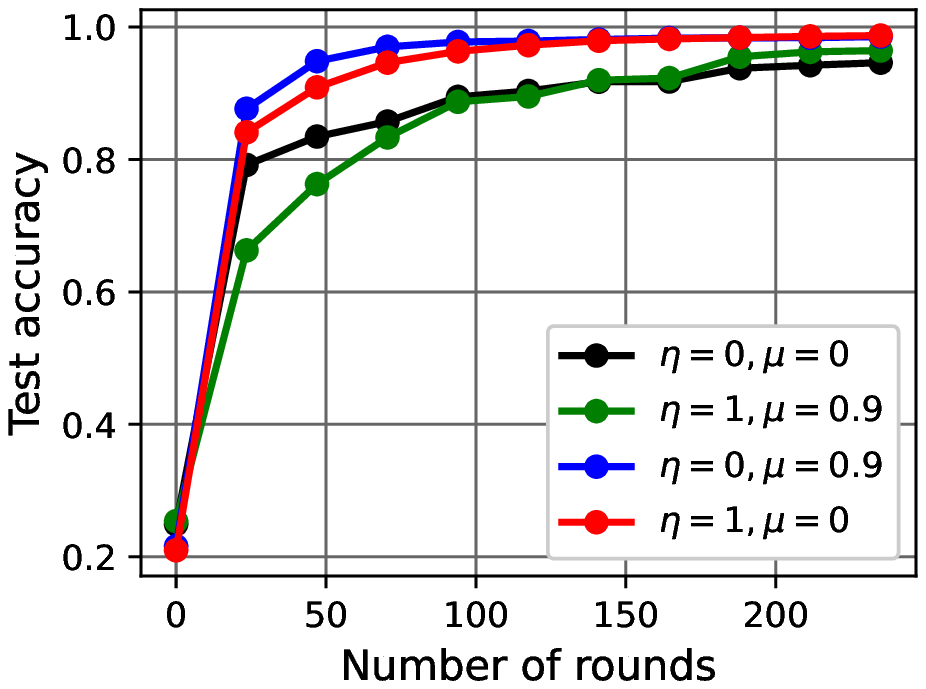}}
    }
    \caption{
    Test accuracy comparison with the different hyper-parameters: the error weight $\eta$ and the momentum $\mu$ for MNIST dataset. 
    }\label{fig:hyperparam}
\end{figure}

\section{Proofs of Theorems and Lemmas} \label{sec:pfs}
This section provides the poofs for our main analytical results stated in Theorems and Lemmas in the previous sections.

\subsection{Proof of Lemma \ref{lem:1}}
\label{pf:lem:1}

\begin{proof}
The second moment of the stochastic gradient $ g_{m,n}^t$ is lower bounded as       \begin{align*}
    \mathbb{E} \left[ \left| g_{m,n}^t \right|^2 \right] &= \int_0^\infty \mathbb{P} \left[ \left| g_{m,n}^t \right|^2 \ge x \right] dx \\
    & \hspace{-0.1em} \stackrel{(a)}{=} \int_0^{\left(\rho_{m,n}^t(\gamma)\right)^2} \! \mathbb{P} \left[ \left| g_{m,n}^t \right|^2 \ge x \right] dx \\
    & \hspace{1em} + \int_{\left(\rho_{m,n}^t(\gamma)\right)^2}^\infty \! \mathbb{P} \left[ \left| g_{m,n}^t \right|^2 \ge x \right] dx \\
    & \ge \int_0^{\left(\rho_{m,n}^t(\gamma)\right)^2} \mathbb{P} \left[ \left| g_{m,n}^t \right|^2 \ge x \right] dx \\
    & \ge \left(\rho_{m,n}^t(\gamma)\right)^2 \, \mathbb{P} \left[ \left| g_{m,n}^t \right| \ge \rho_{m,n}^t(\gamma) \right] \\
    & = \left(\rho_{m,n}^t(\gamma)\right)^2 \, \mathbb{P} \left[ \mathsf{TopKSign} \left( g_{m,n}^t \right) \ne 0 \right] \\
    & \hspace{-0.1em} \stackrel{(b)}{=} \gamma \left(\rho_{m,n}^t(\gamma)\right)^2,  \numberthis{} \label{eq:lb_rho1}     
\end{align*}
where (a) is true because $ \left| g_{m,n}^t \right|^2$ is a positive random variable and (b) follows from Assumption \ref{ass:4}. Since the top-$K$ threshold $\rho_{m,n}^t(\gamma)$ can be chosen to be inversely proportional to the sparsity parameter $\gamma$, we can express a lower bound of $\rho_{m,n}^t(\gamma)$ in terms of a polynomial function of $\gamma$ as 
\begin{align}
    \rho_{m,n}^t(\gamma) \geq \epsilon' \frac{1}{\gamma^{\ell}}, \label{eq:lb_rho2}
\end{align}
 for some constant $\epsilon' \ge 0$ and positive value $\ell \in \mathbb{R}^{+}$. Plugging \eqref{eq:lb_rho2} into \eqref{eq:lb_rho1}, we have
\begin{align}
    \left( \epsilon' \right)^2 \frac{1}{\gamma^{2\ell-1}}  \leq 	\mathbb{E} \left[ \left| g_{m,n}^t \right|^2 \right].
\end{align}   
Furthermore, from Assumption \ref{ass:3}, $\mathbb{E} \left[ \left| g_{m,n}^t \right|^2 \right] \leq \left| \bar{g}_n^t \right|^2 + \sigma_n^2$, we get the upper bound of $\left( \epsilon' \right)^2 \frac{1}{\gamma^{2\ell-1}}$ as
\begin{align}
    \left( \epsilon' \right)^2 \frac{1}{\gamma^{2\ell-1}} \leq \left| \bar{g}_n^t \right|^2 + \sigma_n^2.
\end{align}
By setting $\ell = 1/2$, we have
\begin{align}
    \epsilon' \leq \left| \bar{g}_n^t \right| \sqrt{1 + \frac{\sigma_n^2}{\left| \bar{g}_n^t \right|^2}}.
\end{align}
Since $\sqrt{1+ \sigma_n^2/\left|\bar{g}_n^t\right|^2} \geq 1$ for all $n\in [N]$,  this inequality always holds as long as we choose $\epsilon' $ is smaller than $\left| \bar{g}_n^t \right| $. Consequently, if we set $\epsilon = \epsilon' / \left| \bar{g}_n^t \right|$ and $\ell = 1/2$, $\rho_{m,n}^t(\gamma)$ is lower bounded by $\rho_{m,n}^t(\gamma) \ge \frac{\epsilon}{\sqrt{\gamma}} \left| \bar{g}_n^t \right|$, which completes the proof. 
\end{proof}

\subsection{Proof of Lemma \ref{lem:2}}
\label{pf:lem:2}

\begin{proof}
We are interested in computing the probability that the sign of the true gradient $ \mathsf{sign} \left( \bar{g}_n^t \right)$ is different with the sign of the stochastic gradient chosen by the top-$K$ operator $ \mathsf{TopKSign} \left( g_{m,n}^t \right)$. This sign flipping error occurs if the gap between the stochastic gradient and the true one $ \left| g_{m,n}^t - \bar{g}_n^t \right|$ is larger than the sum of the magnitude of the true one and the threshold, as depicted in Fig. \ref{fig:signflip}. With the threshold used for the top-$K$ sparsification operation in \eqref{eqn:topk}, $\rho_{m,n}^t(\gamma)$, we can upper bound it by 
\begin{align*} \label{eqn:proofA_3}
    & \mathbb{P} \left[ \mathsf{TopKSign} \left( g_{m,n}^t \right) \ne \mathsf{sign} \left( \bar{g}_n^t \right) \right] \\
    & \hspace{6em} \leq \mathbb{P} \left[ \left| g_{m,n}^t - \bar{g}_n^t \right| \ge \left| \bar{g}_n^t \right| + \rho_{m,n}^t(\gamma) \right]. \numberthis{}
\end{align*}

\begin{figure}[t]
  \centering
  \includegraphics[width=1\linewidth]{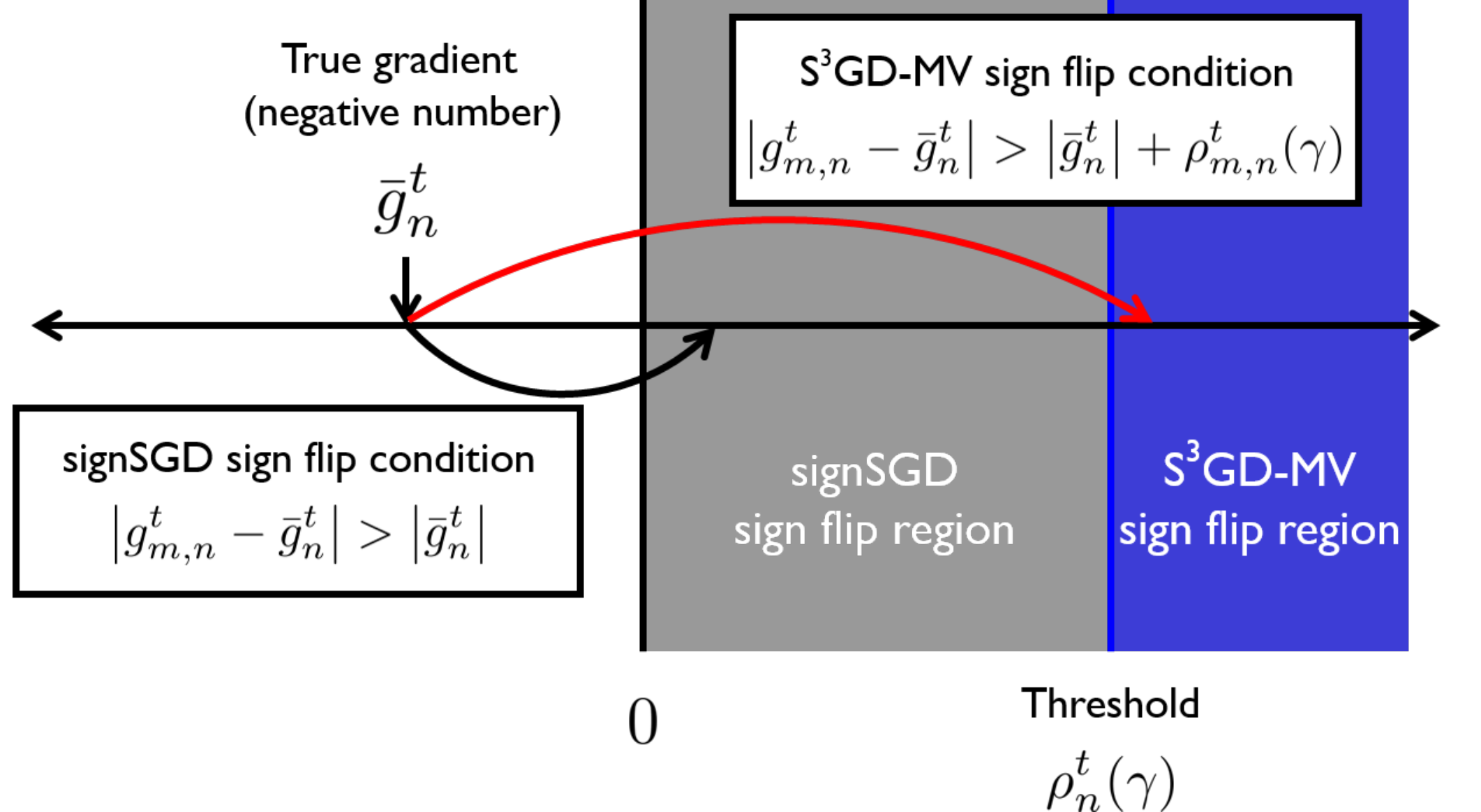}
  \caption{An illustration for the sign flip events of $\mathsf{S}^3$GD-MV and signSGD.}
  \label{fig:signflip}
\end{figure}

\noindent Then, applying the Markov's and Jensen's inequalities, this probability is further upper bounded as           
\begin{align*} 
    \mathbb{P} \left[ \left| g_{m,n}^t - \bar{g}_n^t \right| \ge \left| \bar{g}_n^t \right| + \rho_{m,n}^t(\gamma) \right] &\leq \dfrac{\mathbb{E} \left[ \left| g_{m,n}^t - \bar{g}_n^t \right| \right]}{\left| \bar{g}_n^t \right| + \rho_{m,n}^t(\gamma)} \\
    & \leq \dfrac{\sqrt{\mathbb{E} \left[ \left| g_{m,n}^t - \bar{g}_n^t \right|^2 \right]}}{\left| \bar{g}_n^t \right| + \rho_{m,n}^t(\gamma)} \\
    & \leq \dfrac{\sigma_n^t}{\left| \bar{g}_n^t \right| + \rho_{m,n}^t(\gamma)}, \numberthis{} \label{eq:error} 
\end{align*}
where the last inequality follows from Assumption \ref{ass:3}.  To get the upper bound, we need a lower bound of $\rho_{m,n}^t(\gamma)$, which is stated in Lemma \ref{lem:1}. Invoking \eqref{eq:lem1} into \eqref{eq:error}, we get the upper bound of the sign flipping error probability by the top-$K$ sparsification as
\begin{align*} \label{eq:signerror1}
    \mathbb{P} \left[ \mathsf{TopKSign} \left( g_{m,n}^t \right) \ne \mathsf{sign} \left( \bar{g}_n^t \right) \right] &\leq \dfrac{1}{1+\frac{\epsilon}{\sqrt{\gamma}}} \cdot \frac{\sigma_n^t}{\left| \bar{g}_n^t \right|} \\
    & = \dfrac{1}{\sqrt{B^t}\left( 1+\frac{\epsilon}{\sqrt{\gamma}} \right)} \cdot \frac{\sigma_n}{\left| \bar{g}_n^t \right|}, \numberthis{} 
\end{align*}
where the last equality follows from the sample variance of the mini-batch with size $B^t$, i.e., $\sigma_n^t = \sigma_n / \sqrt{B^t}$. 
\end{proof}

\subsection{Proof of Lemma \ref{lem:4}}
\label{pf:lem:4}

\begin{proof}
Let $\mathcal{Z}_n^t$ be a collection of the workers whose sign of the stochastic gradient is not aligned with that of the true gradient at iteration $t$, i.e., 
\begin{align} \label{eqn:pfC_Z}
    \mathcal{Z}_n^t = \left\{ m\in [M]:  \mathsf{TopKSign} \left( g_{m,n}^t \right)   \neq \mathsf{sign}\left({\bar g}_{n}^t\right)\right\}\subset \mathcal{M}_n^t
\end{align} 
with cardinality $Z_n^t \triangleq |\mathcal{Z}_n|$. Notice that $Z_n^t$ is the sum of $M_n^t$ independent Bernoulli trials with the sign flip error probability $p_{m,n}^t= \mathbb{P} \left[ \mathsf{TopKSign} \left( g_{m,n}^t \right) \neq \mathsf{sign} \left( \bar{g}_n^t \right) \right] $, i.e., $Z_n^t \sim {\sf B}(M_n^t , p_{m,n}^t)$. Then, the sign decoding error by the majority vote can be calculated as
\begin{align} \label{eqn:notalign}
    \mathbb{P} \left[  {\sf sgn} \! \left( \! \sum_{m \in \mathcal{M}_n^t} \!\!\! {\sf TopKSign}(g_{m,n}^t) \right) \! \neq \!  {\sf sign}\left( {\bar g}_n^t \right) \right] & \! = \! \mathbb{P} \left[ Z_n^t \! \geq \! \frac{M_n^t}{2} \right]. 
\end{align}

Applying Chernoff-Hoeffding inequality \cite{hoeffding1963probability}, we get the bound of the error probability as
\begin{align}
    \mathbb{P}\left[Z_n^t \geq \frac{1}{2}M_{n}^t\right] \leq \exp \left(-M_n^t {\sf KL} \left( \frac{1}{2} || p_{m,n}^t\right)\right),
\end{align}
where ${\sf KL}\left(\frac{1}{2}||p_{m,n}^t\right)$ is the Kullback-Leibler distance between two Bernoulli random variables of parameter 1/2 and $p_{m,n}^t$, defined as
\begin{align}
    {\sf KL}\left(\frac{1}{2}||p_{m,n}^t\right) = \frac{1}{2}\ln \frac{1}{2p_{m,n}^t} +\frac{1}{2}\ln \frac{1}{2(1-p_{m,n}^t)}.
\end{align}
Since ${\sf KL}\left(\frac{1}{2}||p_{m,n}^t\right) = \frac{1}{2}\ln \frac{1}{4p_{m,n}^t(1-p_{m,n}^t)}$, this upper bound simplifies to 
\begin{align} 	
    \mathbb{P}\left[Z_n^t \geq \frac{1}{2}M_{n}^t\right]& \leq  \left[4(1-p_{m,n}^t)p_{m,n}^t \right]^{\frac{M_n^t}{2}}.
\end{align}
This completes the proof. 
\end{proof}

\subsection{Proof of Theorem \ref{thm:1}}
\label{pf:thm:1}

\begin{proof}
Recall that from Lemma \ref{lem:3},  $M_n^t$ denotes the number of workers that select the $n$-th gradient component by the top-$K$ sparsification at iteration $t$, and it is distributed as
\begin{align}
    \mathbb{P} \left[ M_n^t = u \right] = \binom{M}{u} \gamma^u(1-\gamma)^{M-u}.
\end{align} 
for $u \in \{0,1,\ldots, M\}$. Then, from \eqref{eqn:s3gd}, \eqref{eqn:assumption2} and \eqref{eqn:notalign}, conditioned on ${\bf x}^t$, the expectation of $f^{t+1} - f^t$ is upper bounded by
    \begin{align}
        &\mathbb{E} \left[ \left. f^{t+1} - f^t \right| {\bf x}^t \right] \nonumber \\
        &\leq -\delta^t \lVert \bar{\mathbf{g}}^t \rVert_1 + \frac{\left( \delta^t \right)^2}{2} \lVert \mathbf{L} \rVert_1  \nonumber \\
        & \hspace{1em} + \delta^t \sum_{n=1}^N \left| \bar{g}_n^t \right| \, \mathbb{P} \left[ M_n^t = 0 \right] - \frac{\left( \delta^t \right)^2}{2} \sum_{n=1}^N L_n \, \mathbb{P} \left[ M_n^t = 0 \right]  \nonumber \\
        & \hspace{1em} + 2\delta^t \sum_{n=1}^N \left| \bar{g}_n^t \right| \sum_{u=1}^M \mathbb{P} \left[ \left. Z_n^t \geq \frac{M_n^t}{2} \right| M_n^t = u \right] \mathbb{P} \left[ M_n^t = u \right].
        \label{eqn:proofB_1}
    \end{align}
Applying Lemma \ref{lem:4}, the probability of the sign decoding error conditioned $M_n^t=u$ is 
\begin{align} \label{eqn:pfB_Z2}
    \mathbb{P}\left[ \left. Z_n^t \geq \frac{1}{2}M_{n}^t  \right| M_n^t =u \right] &\leq  \left[4(1-p_{m,n}^t)p_{m,n}^t \right]^{\frac{u}{2}} \nonumber\\
    &\leq  \left[4(1-{\tilde p}_{m,n}^t){\tilde p}_{m,n}^t \right]^{\frac{u}{2}}, 
\end{align}
where the last inequality follows from Lemma \ref{lem:2}, i.e., $p_{m,n}^t \leq {\tilde p}_{m,n}^t= \frac{\sigma_n}{\sqrt{B^t} \left( 1 + \frac{\epsilon}{\sqrt{\gamma}} \right) \left| \bar{g}_n^t \right|} <\frac{1}{2}$. To obtain a compact expression, we derive a more loose upper bound on the decoding error probability as     
\begin{align} \label{eqn:pfD_1}
    \mathbb{P}\left[ \left. Z_n^t \geq \frac{1}{2}M_{n}^t  \right| M_n^t = u \right]& \leq  \left[4(1-{\tilde p}_{m,n}^t){\tilde p}_{m,n}^t \right]^{\frac{u}{2}}. \nonumber\\
    & \hspace{-0.1em} \stackrel{(a)}{\leq}   \left[4 {\tilde p}_{m,n}^t \right]^{\frac{u}{2}} \nonumber\\
    & \hspace{-0.1em} \stackrel{(b)}{\leq} \frac{{\tilde p}_{m,n}^t}{\sqrt{u}},
\end{align}
where (a) is valid when ${\tilde p}_{m,n}^t<1/4$ and (b) is because $ \left[4 {\tilde p}_{m,n}^t \right]^{\frac{M_n^t}{2}} \leq \frac{{\tilde p}_{m,n}^t}{\sqrt{M_n^t}}$  for $0<{\tilde p}_{m,n}^t< \left(2^{M_n^t} \sqrt{M_n^t}\right)^{-\frac{2}{M_n^t-2}}$ with the limit value of $\lim_{M_n^t\rightarrow \infty}\left(2^{M_n^t} \sqrt{M_n^t}\right)^{-\frac{2}{M_n^t-2}}=1/4$. From Lemma \ref{lem:2} and \ref{lem:3}, this sign decoding probability is upper bounded as
\begin{align}
    &\sum_{u=1}^M \mathbb{P} \left[ \left. Z_n^t \geq \frac{1}{2} M_n^t \right| M_n^t = u \right] \mathbb{P} \left[ M_n^t = u \right] \nonumber \\
    & \hspace{8em} \leq \sum_{u=1}^M \frac{{\tilde p}_{m,n}^t}{\sqrt{u}}  \binom{M}{u} \gamma^u (1-\gamma)^{M-u} \nonumber\\
    & \hspace{8em} = \frac{\sigma_n}{\sqrt{B^t} \left( 1 + \frac{\epsilon}{\sqrt{\gamma}} \right) \left| \bar{g}_n^t \right|} \beta(M,\gamma), \label{eq:decode_error}
\end{align}
    where the last equality follows by defining $\beta(M,\gamma)$ as
\begin{align}
    \beta(M,\gamma) = \sum_{i=1}^M \frac{1}{\sqrt{u}} \binom{M}{u} \gamma^u (1-\gamma)^{M-u}.
\end{align}
Invoking \eqref{eq:decode_error} into \eqref{eqn:proofB_1}, we can rewrite the upper bound of $\mathbb{E} \left[ \left. f^{t+1} - f^t \right| {\bf x}^t \right]$ in a compact form:
\begin{align} \label{eqn:proofB_2} 
    \mathbb{E} \left[ \left. f^{t+1} - f^t \right| {\bf x}^t \right] &\leq \alpha(M,\gamma) \, \delta^t \lVert \bar{\mathbf{g}}^t \rVert_1 + \alpha(M,\gamma) \, \frac{\left( \delta^t \right)^2}{2} \lVert \mathbf{L} \rVert_1 \nonumber \\
    & \hspace{1em} + 2\delta^t  \frac{\lVert \boldsymbol{\sigma} \rVert_1}{\sqrt{B^t} \left( 1 + \frac{\epsilon}{\sqrt{\gamma}} \right)  } \beta(M,\gamma),
\end{align}
where
\begin{align}
    \alpha(M,\gamma) = 1 - (1-\gamma)^M.
\end{align}
Plugging the step size $\delta^t = \frac{1}{\sqrt{T \lVert \mathbf{L} \rVert_1}}$ and the batch size $B^t = T$ into \eqref{eqn:proofB_2}, we obtain 
\begin{align}
    \mathbb{E} \left[ \left. f^{t+1} - f^t \right| {\bf x}^t \right] &\leq - \frac{\alpha(M,\gamma)}{\sqrt{T \lVert \mathbf{L} \rVert_1}} \lVert \bar{\mathbf{g}}^t \rVert_1 + \frac{\alpha(M,\gamma)}{2T} \nonumber \\
    & \hspace{1.2em} + \frac{\beta(M,\gamma)}{T\sqrt{ \lVert \mathbf{L} \rVert_1}} \cdot \frac{2}{ 1 + \frac{\epsilon}{\sqrt{\gamma}} } \lVert \boldsymbol{\sigma} \rVert_1. 
\end{align}
Applying the method of telescoping sums over $t\in [T]$, we get the lower bound  of $f^0 - f^\star$ as    
\begin{align} \label{eq:th2_final}
    & f^0 - f^\star \geq f^0 - \mathbb{E} \left[ f^T \right] \nonumber \\
    & \hspace{3.5em} = \mathbb{E} \left[ \sum_{t=0}^{T-1} f^t - f^{t+1} \right]  \nonumber\\
    & \hspace{3.5em} \geq \mathbb{E} \left[ \sum_{t=0}^{T-1} \left\{ \frac{\alpha(M,\gamma)}{\sqrt{T \lVert \mathbf{L} \rVert_1}} \lVert \bar{\mathbf{g}}^t \rVert_1 - \frac{\alpha(M,\gamma)}{2T} \right. \right. \nonumber \\ 
    & \hspace{9em} \left. \left. - \frac{\beta(M,\gamma)}{T\sqrt{ \lVert \mathbf{L} \rVert_1}}  \frac{2}{  \left( 1 + \frac{\epsilon}{\sqrt{\gamma}} \right)} \lVert \boldsymbol{\sigma} \rVert_1 \right\} \right]  \nonumber \\
    & \hspace{3.5em} = \alpha(M,\gamma) \left\{ \sqrt{\frac{T}{\lVert \mathbf{L} \rVert_1}} \mathbb{E} \left[ \frac{1}{T} \sum_{t=0}^{T-1} \lVert \bar{\mathbf{g}}^t \rVert_1 \right] - \frac{1}{2} \right.  \nonumber \\
    & \hspace{8em} \left. - \frac{\beta(M, \gamma)}{ \alpha(M,\gamma) \sqrt{ \lVert \mathbf{L} \rVert_1}} \frac{2}{1 + \frac{\epsilon}{\sqrt{\gamma}}} \lVert \boldsymbol{\sigma} \rVert_1\right\}. 
\end{align}
By re-arranging \eqref{eq:th2_final}, we conclude that 

\begin{align}  
    & \mathbb{E} \left[ \frac{1}{T} \! \sum_{t=0}^{T-1} \lVert \bar{\mathbf{g}}^t \rVert_1 \right]  \nonumber\\
    &\leq \! \frac{1}{\sqrt{T}} \! \left[ \! \sqrt{\lVert \mathbf{L} \rVert_1} \! \left( \! \frac{f^0 \! - \! f^\star }{\alpha(M,\gamma)}  + \! \frac{1}{2} \right) \! + \! \frac{\beta(M,\gamma)}{  \alpha(M,\gamma)}   \frac{2}{ 1\!+\!\frac{\epsilon}{\sqrt{\gamma}}} \lVert \boldsymbol{\sigma} \rVert_1 \right].
\end{align}
 This completes the proof. 
 \end{proof}

 \subsection{Proof of Corollary \ref{cor:1}}
 \label{pf:cor:1}

\begin{proof}
 When $\gamma \ll 1$, the convergence upper bound in \eqref{eq:thm1} can be represented as
\begin{align} \label{eqn:upperbound}
    \mathbb{E} \left[ \frac{1}{T} \! \sum_{t=0}^{T-1} \lVert \bar{\mathbf{g}}^t \rVert_1 \right] &\leq \frac{1}{\sqrt{T}} \! \left( \frac{f^0 - f^\star}{M\gamma} \sqrt{\lVert \mathbf{L} \rVert_1} + \frac{2\sqrt{\gamma}}{\epsilon} \lVert \boldsymbol{\sigma} \rVert_1 \right) \nonumber \\
    & \triangleq h(\gamma).
\end{align}
From \eqref{eqn:upperbound}, by taking the derivative with respect to $\gamma$ we obtain as
\begin{align}
    \frac{\partial}{\partial \gamma} h(\gamma) = \frac{1}{\sqrt{T}} \left( -\frac{f^0 - f^\star}{M \gamma^2} \sqrt{\lVert \mathbf{L} \rVert_1} + \frac{1}{\epsilon \sqrt{\gamma}} \lVert \boldsymbol{\sigma} \rVert_1 \right).
\end{align}
Then, setting it to zero $\left. \frac{\partial}{\partial \gamma} h(\gamma) \right|_{\gamma = \gamma^\star} = 0$, we attain the stationary point $\gamma^\star$ as
\begin{align}
    \gamma^\star = \left( \frac{\epsilon \left( f^0 - f^\star \right)}{M} \frac{\sqrt{\lVert \mathbf{L} \rVert_1}}{\lVert \boldsymbol{\sigma} \rVert_1} \right)^{\frac{2}{3}}.
\end{align}
 To verify the convexity of $h(\gamma)$, we compute the second-order derivative of $h(\gamma)$ as
\begin{align} \label{eqn:second}
    \frac{\partial^2}{\partial \gamma^2} h(\gamma) = \frac{1}{\sqrt{T}} \left( \frac{2 \left( f^0 - f^\star \right)}{M \gamma^3} \sqrt{\lVert \mathbf{L} \rVert_1} - \frac{1}{2\epsilon \gamma^{\frac{3}{2}}} \lVert \boldsymbol{\sigma} \rVert_1 \right),
\end{align}
and if we substitute $\gamma^\star$ into \eqref{eqn:second}, we obtain
\begin{align}
    \left. \frac{\partial^2}{\partial \gamma^2} h(\gamma) \right|_{\gamma = \gamma^\star} = \frac{3M}{2\epsilon^2 \left( f^0 - f^\star \right)} \frac{\lVert \boldsymbol{\sigma} \rVert_1^2}{\sqrt{\lVert \mathbf{L} \rVert_1}} > 0,
\end{align}
which means $\gamma = \gamma^\star$ is locally convex. Moreover, we can easily check that $\frac{\partial}{\partial \gamma} h(\gamma) > 0$ if $\gamma > \gamma^\star$, and $\lim_{\gamma \rightarrow \infty} \frac{\partial}{\partial \gamma} h(\gamma) = 0$.
Consequently, it is verified that $\gamma^\star$ is an optimal sparsity of $h(\gamma)$ when $\gamma \ll 1$, which concludes the proof. 
\end{proof}

\subsection{Proof of Theorem \ref{thm:2}}
\label{pf:thm:2}

\begin{proof}
By following the update rule in Algorithm \ref{alg:s3gd_rand}, we can modify \eqref{eqn:proofB_1} as
\begin{align*} \label{eqn:proofE_1}
        &\mathbb{E} \left[ \left. f^{t+1} - f^t \right| {\bf x}^t \right] \\
        &\leq -\delta^t \lVert \bar{\mathbf{g}}^t \rVert_1 + \frac{\left( \delta^t \right)^2}{2} \lVert \mathbf{L} \rVert_1 \\
        & \hspace{1em} + \delta^t \sum_{n=1}^N \left| \bar{g}_n^t \right| \, \mathbb{P} \left[ M_n^t = 0 \right] - \frac{\left( \delta^t \right)^2}{2} \sum_{n=1}^N L_n \, \mathbb{P} \left[ M_n^t = 0 \right] \\
        & \hspace{1em} + 2\delta^t \sum_{n=1}^N \left| \bar{g}_n^t \right| \sum_{u=1}^M \mathbb{P} \left[ \left. \tilde{Z}_n^t \geq \frac{M_n^t}{2} \right| M_n^t = u \right] \mathbb{P} \left[ M_n^t = u \right], \numberthis{}
    \end{align*}
where $\tilde{Z}_n^t$ is the $\mathsf{RandKSign} (\cdot)$ version of $Z_n^t$ in \eqref{eqn:pfC_Z}. 
Since the $\mathsf{RandK}$ operator uniformly select the gradient components, we obtain 
\begin{align} \label{eqn:proofE_2}
    \mathbb{P} \left[ M_n^t = 0 \right] = (1-\gamma)^M.
\end{align}
In addition, we can ignore the the magnitude threshold $\rho_{m,n}^t(\gamma)$ by setting it to zero $\rho_{m,n}^t(\gamma)=0$. From \eqref{eqn:proofA_3}, we can express the probability of the sign decoding error for the  single worker case as
\begin{align} \label{eqn:proofE_3}
        \mathbb{P} \left[ \mathsf{RandKSign} \left( g_n^t \right) \ne \mathsf{sign} \left( \bar{g}_n^t \right) \right] \leq \dfrac{\sigma_n}{\sqrt{B^t} \left| \bar{g}_n^t \right|}. \numberthis{}
    \end{align}
By plugging \eqref{eqn:proofE_3} into \eqref{eqn:pfD_1}, the sign error probability is given by 
\begin{align} \label{eqn:proofE_4}
    \mathbb{P} \left[ \tilde{Z}_n^t \geq \frac{M_n^t}{2} \right] \leq \frac{\sigma_n}{\sqrt{M_n^t B_n^t} \left| \bar{g}_n^t \right|}.
\end{align}
We want to highlight that this sign flip error probability is identical to the result in \cite{bernstein2018asignsgd}. By substituting \eqref{eqn:proofE_2} and \eqref{eqn:proofE_4}, we can rearrange \eqref{eqn:proofE_1} as
    \begin{align*} \label{eqn:proofE_5}
        \mathbb{E} \left[ \left. f^{t+1} - f^t \right| {\bf x}^t \right] & \leq \alpha(M,\gamma) \, \delta^t \lVert \bar{\mathbf{g}}^t \rVert_1 + \alpha(M,\gamma) \, \frac{\left( \delta^t \right)^2}{2} \lVert \mathbf{L} \rVert_1  \\
        & \hspace{1em} + \beta(M,\gamma) \frac{2\delta^t}{\sqrt{B^t}} \lVert \boldsymbol{\sigma} \rVert_1. \numberthis{}
    \end{align*}
With the same learning rate and batch size used to derive in Theorem \ref{thm:1}, \eqref{eqn:proofE_5} can be rewritten as
\begin{align*} \label{eqn:proofE_6}
    \mathbb{E} \left[ \left. f^{t+1} - f^t \right| {\bf x}^t \right] &\leq - \frac{\alpha(M,\gamma)}{\sqrt{T \lVert \mathbf{L} \rVert_1}} \lVert \bar{\mathbf{g}}^t \rVert_1 + \frac{\alpha(M,\gamma)}{2T} \\
    & \hspace{1em} + \frac{2 \beta(M,\gamma)}{T\sqrt{\lVert \mathbf{L} \rVert_1}} \lVert \boldsymbol{\sigma} \rVert_1. \numberthis{}
\end{align*}
Applying the telescoping sum over the entire iteration, we finally obtain 
\begin{align*} \label{eqn:proofE_7}
    f^0 - f^\star &\geq f^0 - \mathbb{E} \left[ f^T \right] \\
    & = \mathbb{E} \left[ \sum_{t=0}^{t-1} f^t - f^{t+1} \right] \\
    & \geq \mathbb{E} \left[ \sum_{t=0}^{T-1} \left\{ \frac{\alpha(M,\gamma)}{\sqrt{T \lVert \mathbf{L} \rVert_1}} \lVert \bar{\mathbf{g}}^t \rVert_1 - \frac{\alpha(M,\gamma)}{2T} \right. \right. \\
    & \hspace{10em} \left. \left. - \frac{2 \beta(M,\gamma)}{T\sqrt{\lVert \mathbf{L} \rVert_1}} \lVert \boldsymbol{\sigma} \rVert_1 \right\} \right] \\
    & = \alpha(M,\gamma) \left\{ \sqrt{\frac{T}{\lVert \mathbf{L} \rVert_1}} \mathbb{E} \left[ \frac{1}{T} \sum_{t=0}^{T-1} \lVert \bar{\mathbf{g}}^t \rVert_1 \right] - \frac{1}{2} \right. \\
    & \hspace{8em} \left. - \frac{2\beta(M,\gamma)}{\alpha(M, \gamma) \sqrt{\lVert \mathbf{L} \rVert_1}} \lVert \boldsymbol{\sigma} \rVert_1 \right\}. \numberthis{}
\end{align*}
Rearranging \eqref{eqn:proofE_7}, we obtain the expression in Theorem \ref{thm:2}, which completes the proof. 
\end{proof}

\section{Conclusion}
\label{sec:conclusion}

We have introduced a communication-efficient distributed learning algorithm called ${\sf S}^3$GD-MV. ${\sf S}^3$GD-MV provides two synergistic benefits from sparsification and sign quantization under the majority vote principle: First, it diminishes the communication cost considerably than the state-of-the-art algorithms. Second, it achieves a convergence rate faster than the existing methods with a proper selection of the sparsification parameter $K$ in terms of model size $N$ and the number of workers $M$. We have theoretically and empirically demonstrated these synergistic gains through the convergence rate analysis and simulations.

One promising future work is to investigate the synergistic gain of sparsification and sign quantization in federated learning problems and to show the convergence rate in this more general setting, which is expected to broaden possible applications of ${\sf S}^3$GD-MV. It is also interesting to investigate the robustness of the algorithm for adversarial attacks.

\ifCLASSOPTIONcaptionsoff
  \newpage
\fi



%



\bibliographystyle{IEEEtran}
\bibliography{bibfile}

\end{document}